\newtheorem{theorem}{Theorem}
\newtheorem{proposition}{Proposition}
\newtheorem{definition}{Definition}
\newtheorem{lemma}{Lemma}
\newtheorem{remark}{Remark}
\newcommand{\captionfonts}{\normalsize}
\long\def\@makecaption#1#2{%
  \vskip\abovecaptionskip
  \sbox\@tempboxa{{\captionfonts #1: #2}}%
  \ifdim \wd\@tempboxa >\hsize
    {\captionfonts #1: #2\par}
  \else
    \hbox to\hsize{\hfil\box\@tempboxa\hfil}%
  \fi
  \vskip\belowcaptionskip}
\begin{document}

\hspace{13.9cm}1

\ \vspace{20mm}\\

{\LARGE Improving Generalization via Attribute Selection on Out-of-the-box Data}

\ \\
{\bf \large Xiaofeng Xu$^{\displaystyle 1, \displaystyle 2}$, Ivor W. Tsang$^{\displaystyle 2}$ and Chuancai Liu$^{\displaystyle 1, \displaystyle 3}$}\\
{$^{\displaystyle 1}$School of Computer Science and Engineering, Nanjing University of Science and Technology.}\\
{$^{\displaystyle 2}$Centre for Artificial Intelligence, University of Technology Sydney.}\\
{$^{\displaystyle 3}$Collaborative Innovation Center of IoT Technology and Intelligent Systems, Minjiang University.}\\
%

{\bf Keywords:} Zero-shot learning, attribute selection, out-of-the-box data, generalization error bound

\thispagestyle{empty}
\markboth{}{NC instructions}
\ \vspace{-0mm}\\
%
\begin{center} {\bf Abstract} \end{center}
Zero-shot learning (ZSL) aims to recognize unseen objects (test classes) given some other seen objects (training classes), by sharing information of attributes between different objects. Attributes are artificially annotated for objects and treated equally in recent ZSL tasks. However, some inferior attributes with poor predictability or poor discriminability may have negative impacts on the ZSL system performance. This paper first derives a generalization error bound for ZSL tasks. Our theoretical analysis verifies that selecting the subset of key attributes can improve the generalization performance of the original ZSL model, which utilizes all the attributes. Unfortunately, previous attribute selection methods are conducted based on the seen data, and their selected attributes have poor generalization capability to the unseen data, which is unavailable in the training stage of ZSL tasks. Inspired by learning from pseudo relevance feedback, this paper introduces the out-of-the-box data, which is pseudo data generated by an attribute-guided generative model, to mimic the unseen data. After that, we present an iterative attribute selection (IAS) strategy which iteratively selects key attributes based on the out-of-the-box data. Since the distribution of the generated out-of-the-box data is similar to the test data, the key attributes selected by IAS can be effectively generalized to test data. Extensive experiments demonstrate that IAS can significantly improve existing attribute-based ZSL methods and achieve state-of-the-art performance.

\section{Introduction}
With the rapid development of machine learning technologies, especially the rise of deep neural network, visual object recognition has made tremendous progress in recent years \citep{zheng2018sparse,shen2018weakly}. These recognition systems even outperform humans when provided with a massive amount of labeled data. However, it is expensive to collect sufficient labeled samples for all natural objects, especially for the new concepts and many more fine-grained subordinate categories \citep{zhou2019learning}. Therefore, how to achieve an acceptable recognition performance for objects with limited or even no training samples is a challenging but practical problem \citep{palatucci2009zero}. Inspired by human cognition system that can identify new objects when provided with a description in advance \citep{murphy2004big}, zero-shot learning (ZSL) has been proposed to recognize unseen objects with no training samples \citep{cheng2018random,ji2019attribute}. Since labeled sample is not given for the target classes, we need to collect some source classes with sufficient labeled samples and find the connection between the target classes and the source classes.

As a kind of semantic representation, attributes are widely used to transfer knowledge from the seen classes (source) to the unseen classes (target) \citep{ma2017joint}. Attributes play a key role in sharing information between classes and govern the performance of zero-shot classification. In previous ZSL works, all the attributes are assumed to be effective and treated equally. However, as pointed out in \citet{guo2018zero}, different attributes have different properties, such as the distributive entropy and the predictability. The attributes with poor predictability or poor discriminability may have negative impacts on the ZSL system performance. The poor predictability means that the attributes are hard to be correctly recognized from the feature space, and the poor discriminability means that the attributes are weak in distinguishing different objects. Hence, it is obvious that not all the attributes are necessary and effective for zero-shot classification.

Based on these observations, selecting the key attributes, instead of using all the attributes, is significant and necessary for constructing ZSL models. \citet{guo2018zero} proposed the zero-shot learning with attribute selection (ZSLAS) model, which selects attributes by measuring the distributive entropy and the predictability of attributes based on the training data. ZSLAS can improve the performance of attribute-based ZSL methods, while it suffers from the drawback of generalization. Since the training classes and the test classes are disjoint in ZSL tasks, the training data is bounded by the box cut by attributes (illustrated in Figure \ref{fig_box}). Therefore, the attributes selected based on the training data have poor generalization capability to the unseen test data.

\begin{figure}[t] 
	\hfill
	\begin{center}
		\includegraphics[width = 0.6\textwidth]{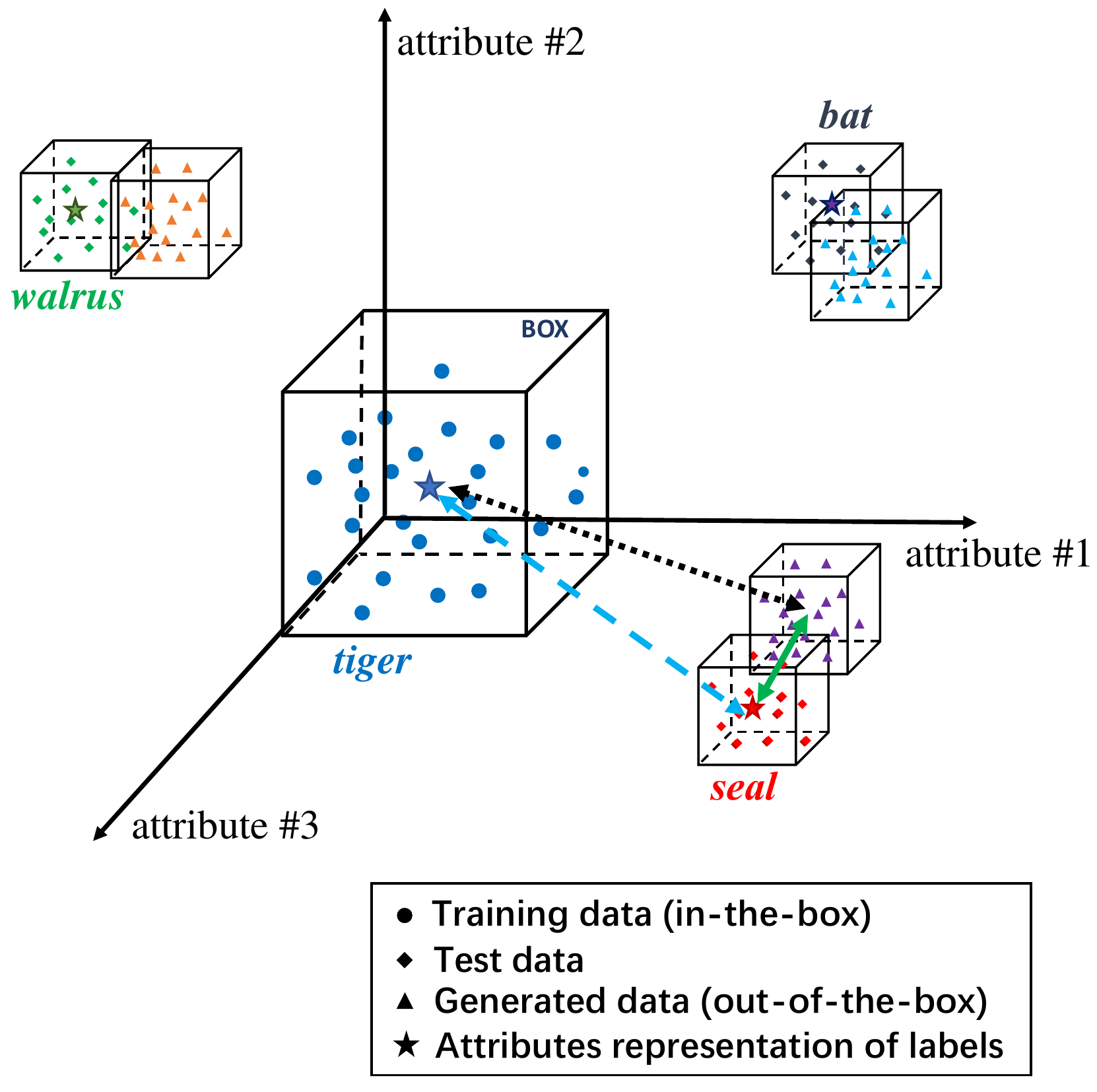}  
	\end{center}
	\caption{Illustration of out-of-the-box data. The distance between the out-of-the-box data and the test data (green solid arrow) is much less than the distance between the training data and the test data (blue dashed arrow).}
	\label{fig_box}
\end{figure}

To address the drawback, this paper derives a generalization error bound for ZSL problem. Since attributes for ZSL task is literally like the codewords in the error correcting output code (ECOC) model \citep{dietterich1994solving}, we analyze the bound from the perspective of ECOC. Our analyses reveal that the key attributes need to be selected based on the data which is out of the box (i.e. the distribution of the training classes). Considering that test data is unavailable during the training stage for ZSL tasks, inspired by learning from pseudo relevance feedback \citep{miao2016topprf}, we introduce the \textit{out-of-the-box}\footnote{The out-of-the-box data is generated based on the training data and the attribute representation without extra information, which follows the standard zero-shot learning setting.} data to mimic the unseen test classes. The out-of-the-box data is generated by an attribute-guided generative model using the same attribute representation as the test classes. Therefore, the out-of-the-box data has a similar distribution to the test data. 

Guided by the performance of ZSL model on the out-of-the-box data, we propose a novel \textit{iterative attribute selection} (IAS) model to select the key attributes in an iterative manner. 
Figure \ref{fig_zslias} illustrates the procedures of the proposed ZSL with iterative attribute selection (ZSLIAS).
Unlike the previous ZSLAS that uses training data to select attributes at once, our IAS first generates out-of-the-box data to mimic the unseen classes, and subsequently iteratively selects key attributes based on the generated out-of-the-box data. During the test stage, selected attributes are employed as a more efficient semantic representation to improve the original ZSL model. By adopting the proposed IAS, the improved attribute embedding space is more discriminative for the test data, and hence improves the performance of the original ZSL model.

\begin{figure*}[t] 
	\hfill
	\begin{center}
		\includegraphics[width = 0.9\textwidth]{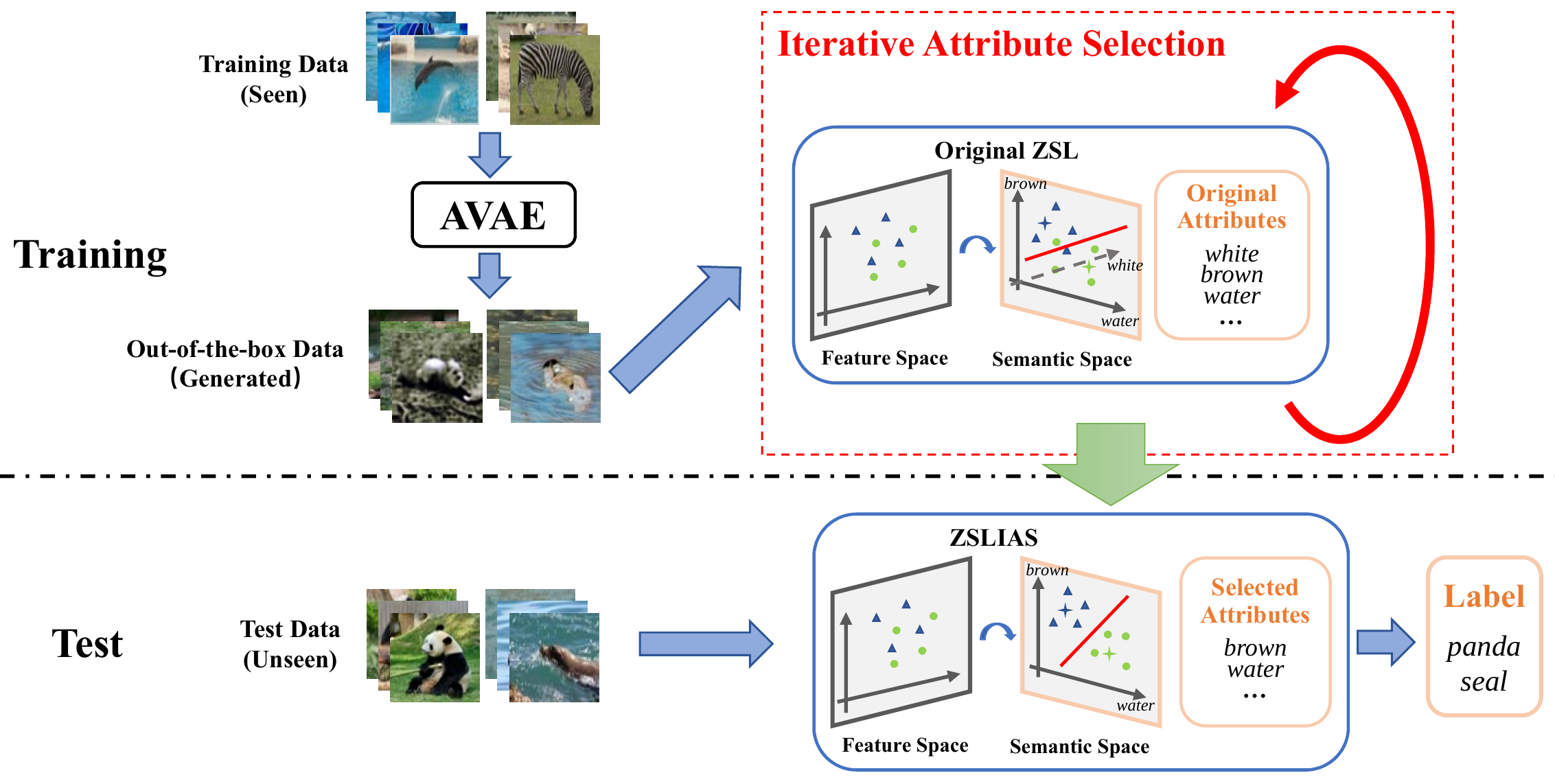}  
	\end{center}
	\caption{The pipeline of the ZSLIAS framework. \textit{In training stage}, we first generate the out-of-the-box data by a tailor-made generative model (i.e. AVAE), and then iteratively select attributes based on the out-of-the-box data. \textit{In test stage}, the selected attributes are exploited to build ZSL model for unseen objects categorization.}
	\label{fig_zslias}
\end{figure*}

The main contributions of this paper are summarized as follows: 

\begin{itemize}
	\item We present a generalization error analysis for ZSL problem. Our theoretical analyses prove that selecting the subset of key attributes can improve the generalization performance of the original ZSL model which utilizes all the attributes.
	\item Based on our theoretical findings, we propose a novel iterative attribute selection strategy to select key attributes for ZSL tasks.
	\item Since test data is unseen during the training stage for ZSL tasks, we introduce the out-of-the-box data to mimic test data for attribute selection. Such data generated by a designed generative model has a similar distribution to the test data. Therefore, attributes selected based on the out-of-the-box data can be effectively generalized to the unseen test data.
	\item Extensive experiments demonstrate that IAS can effectively improve the attribute-based ZSL model and achieve state-of-the-art performance. 
\end{itemize}

The rest of the paper is organized as follows. Section \ref{sec_relatedworks} reviews related works. Section \ref{sec_method} gives the preliminary and motivation. Section \ref{sec_bound} presents the theoretical analyses on generalization bound for attribute selection. Section \ref{sec_ias} proposes the iterative attribute selection model.  Experimental results are reported in Section \ref{sec_exper}. Conclusion is drawn in Section \ref{sec_conclu}.

\section{Related Works}
\label{sec_relatedworks}
In this section, we review some related works on zero-shot learning, attribute selection and deep generative models.

\subsection{Zero-shot Learning}
ZSL can recognize new objects using attributes as the intermediate semantic representation. Some researchers adopt the probability-prediction strategy to transfer information. \citet{lampert2014attribute} proposed a popular baseline, i.e. direct attribute prediction (DAP). DAP learns probabilistic attribute classifiers using the seen data and infers the label of the unseen data by combining the results of pre-trained classifiers. 

Most recent works adopt the label-embedding strategy that directly learns a mapping function from the input features space to the semantic embedding space. 
One line of works is to learn linear compatibility functions. For example, \citet{akata2016label} presented an attribute label embedding (ALE) model which learns a compatibility function combined with ranking loss. \citet{romera2015embarrassingly} proposed an approach that models the relationships among features, attributes and classes as a two linear layers network. Another direction is to learn nonlinear compatibility functions. \citet{xian2016latent} presented a nonlinear embedding model that augments bilinear compatibility model by incorporating latent variables. \citet{airola2018fast} proposed a first general Kronecker product kernel-based learning model for ZSL tasks. In addition to the classification task, \citet{ji2019attribute} proposed an attribute network for zero-shot hashing retrieval task.

\subsection{Attribute Selection}
Attributes, as a kind of popular semantic representation of visual objects, can be the appearance, a part or a property of objects \citep{farhadi2009describing}. For example, object \textit{elephant} has the attribute \textit{big} and \textit{long nose}, object \textit{zebra} has the attribute \textit{striped}. Attributes are widely used to transfer information to recognize new objects in ZSL tasks \citep{sun2017domain, xu2019complementary}. As shown in Figure \ref{fig_box}, using attributes as the semantic representation, data of different categories locates in different boxes bounded by the attributes. Since the attribute representation of the seen classes and the unseen class are different, the boxes with respect to the seen data and the unseen data are disjoint.

In previous ZSL works, all the attributes are assumed to be effective and treated equally. However, as pointed out in \citet{guo2018zero}, not all the attributes are effective for recognizing new objects. Therefore, we should select the key attributes to improve the semantic presentation. \citet{liu2014automatic} proposed a novel greedy algorithm which selects attributes based on their discriminating power and reliability. \citet{guo2018zero} proposed to select attributes by measuring the distributive entropy and the predictability of attributes based on the training data. In short, previous attribute selection models are conducted based on the training data, which makes the selected attributes have poor generalization capability to the unseen test data. While our IAS iteratively selects attributes based on the out-of-the-box data which has a similar distribution to the test data, and thus the key attributes selected by our model can be more effectively generalized to the unseen test data.

\subsection{Attribute-guided Generative Models}
Deep generative models \citep{ma2017pose} aim to estimate the joint distribution $ p (y; x) $ of samples and labels, by learning the class prior probability $ p (y) $ and the class-conditional density $ p (x | y) $ separately. The generative model can be extended to a conditional generative model if the generator is conditioned on some extra information, such as attributes in the proposed method. \citet{odena2016conditional} introduced a conditional version of generative adversarial nets, i.e. CGAN, which can be constructed by simply feeding the data label. CGAN is conditioned on both the generator and discriminator and can generate samples conditioned on class labels. Conditional Variational Autoencoder (CVAE) \citep{sohn2015learning}, as an extension of Variational Autoencoder, is a deep conditional generative model for structured output prediction using Gaussian latent variables. We modify CVAE with the attribute representation to generate out-of-the-box data for the attribute selection.

\section{Preliminary and Motivation}
\label{sec_method}
\subsection{ZSL Task Formulation}
We consider zero-shot learning as a task that recognizes unseen classes which have no labeled samples available. Given a training set $D_{s}=\left \{ \left ( x_{n},y_{n} \right ),n= 1,...,N_{s} \right \}$, the task of traditional ZSL is to learn a mapping $f:\mathcal{X}\rightarrow \mathcal{Y}$ from the image feature space to the label embedding space, by minimizing the following regularized empirical risk:
\begin{equation}
L\left ( y,f\left ( x;\mathbf{W} \right ) \right ) = \frac{1}{N_{s}}\sum_{n=1}^{N_{s}}l\left ( y_{n},f\left ( x_{n};\mathbf{W} \right ) \right )+\Omega \left ( \mathbf{W} \right ),
\label{eq_lossbasiczsl}
\end{equation}
where $l\left ( \cdot \right )$ is the loss function, which can be square loss $1/2(f(x)-y)^2$, logistic loss $\mathrm{log}(1+\mathrm{exp}(-yf(x)))$ or hinge loss $\mathrm{max}(0,1-yf(x))$. $\mathbf{W}$ is the parameter of mapping $f$, and $\Omega\left ( \cdot \right )$ is the regularization term.

The mapping function $ f $ is defined as follows:
\begin{equation}
f\left ( x;\mathbf{W} \right )=\mathrm{\arg} \mathop{\max}_{y\in \mathcal{Y} } F\left ( x,y;\mathbf{W} \right ),
\label{eq:flesori}
\end{equation}
where the function $F:\mathcal{X} \times \mathcal{Y}\rightarrow \mathcal{R}$ is the bilinear compatibility function to associate image features and label embeddings defined as follows:
\begin{equation}
F\left ( x,y;\mathbf{W} \right )=\theta \left ( x \right )^{T}\mathbf{W} \varphi \left ( y \right ),
\label{eq:Flesori}
\end{equation}
where  $\theta \left ( x \right )$ is the image features, $ \varphi \left ( y \right ) $ is the label embedding (i.e. attribute representation).

We summarize some frequently used notations in Table \ref{table_notations}.

\begin{table}[t]
	\centering
	\renewcommand\arraystretch{1}
	\caption{Notations and Descriptions.}
	\label{table_notations}
	\setlength{\tabcolsep}{2mm}{
		\begin{tabular}{|c|c||c|c|}
			\hline
			\textbf{Notation} &\textbf{Description} &\textbf{Notation} &\textbf{Description} \\ \hline\hline
			$D_s$ &training data (seen) &$N_s$ &\#training samples \\ \hline
			$D_u$ &test data (unseen) &$N_u$ &\#test samples \\ \hline
			$D_g$ & out-of-the-box data &$N_g$ &\#generated samples \\ \hline
			$\mathcal{X}$ &image features &$d$    &\#dimension of features \\ \hline
			$\mathcal{Y}_s$ &training classes (seen)    &$K$ &\#training classes \\ \hline
			$\mathcal{Y}_u$ &test classes (unseen)       &$L$    &\#test classes \\ \hline	
			$\mathbf{A}$ & attribute matrix &$\mathbf{a}_y$      &attribute vector of label $ y $    \\ \hline		
			${N_a}$      &\#all the attributes   &$\mathcal{A}$ &set of original attributes    \\ \hline
			$\mathbf{s}$ &selection vector    &$\mathcal{S}$ &subset of selected attributes    \\ \hline 	
		\end{tabular}
	}
\end{table}

\subsection{Interpretation of ZSL Task}\label{sec:interpretation}
In traditional ZSL models, all the attributes are assumed to be effective and treated equally. While in previous works, some researchers pointed out that not all the attributes are useful and significant for zero-shot classification \citep{jiang2017learning}. To the best of our knowledge, there is no theoretical analysis for the generalization performance of ZSL tasks, let alone selecting informative attributes for unseen classes. To fill in this gap, we first derive the generalization error bound for ZSL models. 

The intuition of our theoretical analysis is to simply treat the attributes as a kind of error correcting output codes, then the prediction of ZSL tasks can be deemed as the assignment of class labels with respective pre-defined ECOC, which is the closest to the predicted ECOC problem \citep{rocha2014multiclass}. Based on this novel interpretation, we derive a theoretical generalization error bound of ZSL model as shown in Section \ref{sec_bound}. From the generalization bound analyses, we find that the discriminating power of attributes governs the performance of the ZSL model.

\subsection{Deficiency of ZSLAS}
Some attribute selection works have been proposed in recent years. \citet{guo2018zero} proposed the ZSLAS model that selects attributes based on the distributive entropy and the predictability of attributes using training data. Simultaneously considering the ZSL model loss function and attribute properties in a joint optimization framework, they selected attributes by minimizing the following loss function:
\begin{equation}
\begin{split}
L( y,f( x;\mathbf{s},\mathbf{W} ) ) = & \frac{1}{N_{s}}\sum_{n=1}^{N_{s}}\{l_{\mathrm{ZSL}}\left ( y_{n},f\left ( x_{n};\mathbf{s},\mathbf{W} \right ) \right ) \\
& + \alpha l_p(\theta(x_n),\varphi(y_n);\mathbf{s}) - \beta l_v(\theta(x_n),\mu;\mathbf{s})\},
\end{split}
\label{eq_zslas}
\end{equation}
where $\mathbf{s}$ is the weight vector of the attributes which will be further used for attribute selection. $\theta(\cdot)$ is the attribute classifier, $\varphi(y_n)$ is the attribute representation, $\mu$ is an auxiliary parameter. $l_{\mathrm{ZSL}}$ is the model based loss function for ZSL, i.e. $l(\cdot)$ as defined in Eq. \eqref{eq_lossbasiczsl}. $l_{p}$ is the attribute prediction loss which can be defined based on specific ZSL models and $l_{v}$ is the loss of variance which measures the distributive entropy of attributes \citep{guo2018zero}. After getting the weight vector $\mathbf{s}$ by optimizing Eq. \eqref{eq_zslas}, attributes can be selected according to $\mathbf{s}$ and then be used to construct ZSL model.

From our theoretical analyses in Section~\ref{sec_bound}, ZSLAS can improve the original ZSL model to some extent \citep{guo2018zero}. However, ZSLAS suffers from a drawback that the attributes are selected based on the training data.
Since the training and test classes are disjoint in ZSL tasks, 
it is difficult to measure the quality and contribution of attributes regarding discriminating the unseen test classes. Thus, the selected attributes by ZSLAS have poor generalization capability to the test data due to the domain shift problem.

\subsection{Definition of Out-of-the-box}
Since previous attribute selection models are conducted based on the bounded in-the-box data, the selected attributes have poor generalization capability to the test data. However, the test data is unavailable during the training stage.
Inspired by learning from pseudo relevance feedback \citep{miao2016topprf},
we introduce the pseudo data, which is outside the box of the training data, to mimic test classes to guide the attribute selection. Considering that the training data is bounded in the box by attributes, we generate the \textit{out-of-the-box} data using an attribute-guided generative model. Since the out-of-the-box data is generated based on the same attribute representation as test classes, the box of the generated data will overlap with the box of the test data. And consequently, the key attributes selected by the proposed IAS model based on the out-of-the-box data can be effectively generalized to the unseen test data.

\section{Generalization Bound Analysis}
\label{sec_bound}
In this section, we first derive the generalization error bound of the original ZSL model and then analyze the bound changes after attribute selection. In previous works, some generalization error bounds have been presented for the ZSL task. \citet{romera2015embarrassingly} transformed ZSL problem to the domain adaptation problem and then analyzed the risk bounds for domain adaptation. \citet{stock2018comparative} considered ZSL problem as a specific setting of pairwise learning and analyzed the bound by the kernel ridge regression model. However, these bound analysis are not suitable for ZSL model due to their assumptions. In this work, we derive the generalization bound from the perspective of ECOC model, which is more similar to the ZSL problem.

\subsection{Generalization Error Bound of ZSL}
Zero-shot classification is an effective way to recognize new objects which have no training samples available. The basic framework of ZSL model is using attribute representation as the bridge to transfer knowledge from seen objects to unseen objects. To simplify the analysis, we consider ZSL as a multi-class classification problem. Therefore, ZSL task can be addressed via an ensemble method which combines many binary attribute classifiers. Specifically, we pre-trained a binary classifier for each attribute separately in the training stage. To classify a new sample, all the attribute classifiers are evaluated to obtain an attribute codeword (a vector in which each element represents the output of an attribute classifier). Then we compare the predicted codeword to the attribute representations of all the test classes to retrieve the label of the test sample.

To analyze the generalization error bound of ZSL, we first define some distances in the attribute space, and then present a proposition of the error correcting ability of attributes.

\begin{definition}[Generalized Attribute Distance]
	\label{def1}
	Given the attribute matrix $ \mathbf{A} $ for associating labels and attributes, let $ \mathbf{a}_i $, $ \mathbf{a}_j $ denote the attribute representation of label $ y_i $ and $ y_j $ in matrix $ \mathbf{A} $ with length $ N_a $, respectively. Then the generalized attribute distance between $ \mathbf{a}_i $ and $ \mathbf{a}_j $ can be defined as 
	\begin{equation}
	d(\mathbf{a}_i,\mathbf{a}_j) = \sum_{m=1}^{N_a}\Delta (\mathbf{a}_i^{(m)},\mathbf{a}_j^{(m)}),
	\label{eq_genedist}
	\end{equation}
	where $ N_a $ is the number of attributes, $ \mathbf{a}_i^{(m)} $ is the $ m^{th} $ element in the attribute representation $ \mathbf{a}_i $ of the label $ y_i $. $ \Delta (\mathbf{a}_i^{(m)},\mathbf{a}_j^{(m)}) $ is equal to $ 1 $ if $ \mathbf{a}_i^{(m)} \neq \mathbf{a}_j^{(m)} $, $ 0 $ otherwise.
\end{definition}

We further define the minimum distance between any two attribute representations in the attribute space.

\begin{definition}[Minimum Attribute Distance]
	\label{def2}
	The minimum attribute distance $ \tau $ of matrix $ \mathbf{A} $ is the minimum distance between any two attribute representations $\mathbf{a}_i$ and $\mathbf{a}_j$ as follows:
	\begin{equation}
	\tau = \mathop{\min}_{i\neq j} d(\mathbf{a}_i,\mathbf{a}_j), \qquad \forall \  1\leq i,j \leq N_a.
	\label{eq_minidist}
	\end{equation}
\end{definition}

Given the definition of distance in the attribute space, we can prove the following proposition.

\begin{proposition}[Error Correcting Ability \citep{zhou2019n}]
	\label{pro1}
	Given the label-attribute correlation matrix $ \mathbf{A} $ and a vector of predicted attribute representation $ f(x) $ for an unseen test sample $ x $ with known label $y$. If $ x $ is incorrectly classified, then the distance between the predicted attribute representation $ f(x) $ and the correct attribute representation $ \mathbf{a}_y $ is greater than half of the minimum attribute distance $ \tau $, i.e.
	\begin{equation}
	d(f(x),\mathbf{a}_y) \geq \frac{\tau}{2}.
	\label{eq_errcorr}
	\end{equation}
\end{proposition}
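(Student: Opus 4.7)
The plan is to run the classical minimum-distance decoding argument used for error-correcting output codes, which is the interpretive lens adopted in Section~\ref{sec:interpretation}. In that framework the ZSL predictor assigns $x$ to the class whose attribute codeword is closest to the predicted attribute vector, i.e.\ $\hat{y}(x) = \arg\min_{y'} d(f(x), \mathbf{a}_{y'})$. Misclassification of $x$ then means there exists some $y' \neq y$ with $d(f(x), \mathbf{a}_{y'}) \leq d(f(x), \mathbf{a}_y)$. I would make this nearest-codeword decoding rule explicit at the outset, since it is the only link between ``incorrect classification'' and the geometry of the attribute space.

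First, I would fix $y'$ to be the erroneously predicted label and apply the triangle inequality for the generalized attribute distance of Definition~\ref{def1}. Because $d$ is a coordinate-wise sum of indicator disagreements, it coincides with ordinary Hamming distance on the attribute alphabet and therefore satisfies $d(\mathbf{a}_y, \mathbf{a}_{y'}) \leq d(f(x), \mathbf{a}_y) + d(f(x), \mathbf{a}_{y'})$. Combining this with the misclassification inequality above yields $d(\mathbf{a}_y, \mathbf{a}_{y'}) \leq 2\, d(f(x), \mathbf{a}_y)$.

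Next I would invoke Definition~\ref{def2} to lower-bound the left-hand side by the minimum attribute distance, $\tau \leq d(\mathbf{a}_y, \mathbf{a}_{y'})$, and divide through by two to obtain $d(f(x), \mathbf{a}_y) \geq \tau/2$, which is exactly the claim. Up to the decoding-rule preamble, the argument is essentially a two-line chain of inequalities.

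The main obstacle will be justifying the triangle-inequality step rigorously, since Definition~\ref{def1} uses $\Delta$ on equality/inequality of entries and thus implicitly assumes $f(x)$ takes values in the same discrete attribute alphabet as the codewords $\mathbf{a}_i$. In practice ZSL attribute classifiers may output real-valued scores, so I would either (i) state upfront that the predicted codeword has been thresholded into the attribute alphabet, so that $d$ is a genuine Hamming metric and the triangle inequality is automatic, or (ii) extend $\Delta$ to a normalized per-coordinate metric and check the triangle inequality holds coordinate-wise before summing. I would flag this assumption explicitly so the subsequent bound is watertight.
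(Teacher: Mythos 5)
Your proposal is correct and follows essentially the same route as the paper's proof: the misclassification inequality $d(f(x),\mathbf{a}_y)\geq d(f(x),\mathbf{a}_r)$ for the wrongly predicted codeword, the triangle inequality for the Hamming-type distance of Definition~\ref{def1}, and the minimum-distance bound of Definition~\ref{def2}, merely written as a single global triangle inequality rather than the paper's coordinate-wise expansion. Your added caveat about thresholding $f(x)$ into the attribute alphabet is a reasonable clarification that the paper leaves implicit.
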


\begin{proof}
	Suppose that the predicted attribute representation for test sample $ x $ with correct attribute representation $ \mathbf{a}_y $ is $ f(x) $, and the sample $ x $ is incorrectly classified to the mismatched attribute representation $ \mathbf{a}_r $, where $r\in{\mathcal{Y}_u \setminus \{y\}}$. Then the distance between $ f(x) $ and $ \mathbf{a}_y $ is greater than the distance between $ f(x) $ and $ \mathbf{a}_r $, i.e.,
	\begin{equation}
	d(f(x),\mathbf{a}_y) \geq d(f(x),\mathbf{a}_r).
	\end{equation}
	
	Here, the distance between attribute representation can be expanded as the element-wise summation based on Eq. \eqref{eq_genedist} as follows:
	\begin{equation}
	\sum_{m=1}^{N_a}\Delta (f^{(m)}(x),\mathbf{a}_y^{(m)}) \geq \sum_{m=1}^{N_a}\Delta (f^{(m)}(x),\mathbf{a}_r^{(m)}).
	\label{eq_apaineq}
	\end{equation}
	
	Then, we have:
	\begin{equation}
	\begin{split}
	d(f(x),\mathbf{a}_y) & = \sum_{m=1}^{N_a}\Delta (f^{(m)}(x),\mathbf{a}_y^{(m)}) \\
	& = \frac{1}{2}\sum_{m=1}^{N_a}\Big\{\Delta (f^{(m)}(x),\mathbf{a}_y^{(m)}) + \Delta (f^{(m)}(x),\mathbf{a}_y^{(m)})\Big\} \\
	& \overset{(\romannumeral1)}{\geq}  \frac{1}{2}\sum_{m=1}^{N_a}\Big\{\Delta (f^{(m)}(x),\mathbf{a}_y^{(m)})  + \Delta (f^{(m)}(x),\mathbf{a}_r^{(m)})\Big\} \\
	& \overset{(\romannumeral2)}{\geq} \frac{1}{2}\sum_{m=1}^{N_a}{ \Delta (\mathbf{a}_y^{(m)},\mathbf{a}_r^{(m)})} \\
	& =  \frac{1}{2}d(\mathbf{a}_y,\mathbf{a}_r) \overset{(\romannumeral3)}{\geq}  \frac{\tau}{2},
	\end{split}
	\end{equation}
	where $(\romannumeral1)$ follows Eq. \eqref{eq_apaineq}, $(\romannumeral2)$ is based on the triangle inequality of distance metric \citep{zhou2019n} and $(\romannumeral3)$ follows Eq. \eqref{eq_minidist}.
\end{proof}

From Proposition \ref{pro1}, we can find that, the predicted attribute representation is not required to be exactly the same as the ground truth for each unseen test sample. As long as the distance is less than $ {\tau}/{2} $, ZSL models can correct the error committed by some attribute classifiers and make an accurate prediction.

Based on the Proposition of error correcting ability of attributes, we can derive the theorem of generalization error bound for ZSL.

\begin{theorem}[Generalization Error Bound of ZSL]
	\label{the1}
	Given $ N_a $ attribute classifiers, $ f^{(1)}, $ $ f^{(2)}, ...,f^{(N_a)} $, trained on training set $ D_s $ with label-attribute matrix $ \mathbf{A} $, the generalization error rate for the attribute-based ZSL model is upper bounded by
	\begin{equation}
	\frac{2N_a\bar{B}}{\tau},
	\end{equation}
	where $ \bar{B}=\frac{1}{N_a}\sum_{m=1}^{N_a}B_m $ and $ B_m $ is the upper bound of the prediction loss for the $ m^{th} $ attribute classifier $ f^{(m)} $.
\end{theorem}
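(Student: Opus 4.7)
The plan is to leverage Proposition~\ref{pro1} as a pointwise reduction from the ZSL misclassification event to the aggregated attribute-prediction loss, and then take expectations. The core observation is that Proposition~\ref{pro1} lets us dominate the $0/1$ class-level error by a scaled version of the attribute-level Hamming loss $d(f(x),\mathbf{a}_y)$, which is additive across attribute classifiers and therefore easy to bound by the per-classifier losses $B_m$.

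First, I would fix an arbitrary test pair $(x,y)$ and write the class-level error indicator as $\mathbb{I}[\hat y(x)\neq y]$, where $\hat y(x)=\arg\min_{y'}d(f(x),\mathbf{a}_{y'})$ is the nearest-codeword decoding. By Proposition~\ref{pro1}, the event $\{\hat y(x)\neq y\}$ forces $d(f(x),\mathbf{a}_y)\geq \tau/2$, so the pointwise ``Markov-type'' inequality
\[
\mathbb{I}[\hat y(x)\neq y]\;\leq\;\frac{2\,d(f(x),\mathbf{a}_y)}{\tau}
\]
holds for every $(x,y)$ (trivially when $x$ is classified correctly, since the right-hand side is nonnegative).

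Next, I would take expectation over the underlying test distribution and expand $d$ using Definition~\ref{def1}:
\[
\mathrm{err}\;=\;\mathbb{E}\bigl[\mathbb{I}[\hat y(x)\neq y]\bigr]\;\leq\;\frac{2}{\tau}\,\mathbb{E}\!\left[\sum_{m=1}^{N_a}\Delta\bigl(f^{(m)}(x),\mathbf{a}_y^{(m)}\bigr)\right]\;=\;\frac{2}{\tau}\sum_{m=1}^{N_a}\mathbb{E}\bigl[\Delta(f^{(m)}(x),\mathbf{a}_y^{(m)})\bigr].
\]
Since $B_m$ is the stated upper bound on the prediction loss of the $m$-th attribute classifier, each expectation is at most $B_m$, and substituting $\bar B=\frac{1}{N_a}\sum_m B_m$ yields the claimed $2N_a\bar B/\tau$ bound.

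The main obstacle, and the place I would spend the most care, is justifying the per-classifier step $\mathbb{E}[\Delta(f^{(m)}(x),\mathbf{a}_y^{(m)})]\leq B_m$. The theorem statement is deliberately generic about the prediction loss, but $\Delta$ is the $0/1$ Hamming mismatch, so I would either interpret $B_m$ as an upper bound on the $0/1$ generalization loss of attribute classifier $f^{(m)}$ directly, or otherwise invoke the standard surrogate relation (e.g.\ $0/1\leq$ hinge/logistic/square) that lets a bound on the surrogate loss dominate the indicator. A secondary point worth a line of comment is that the decoding rule in Proposition~\ref{pro1} is the minimum-distance decoder over $\mathcal{Y}_u$, which is consistent with the $\arg\max$ compatibility form in \eqref{eq:flesori} when the attribute codes are binary and $F$ reduces to negated Hamming distance; making this correspondence explicit ties the ECOC-style argument back to the ZSL model introduced in Section~\ref{sec_method}.
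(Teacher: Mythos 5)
Your proposal is correct and follows essentially the same route as the paper: both rest on Proposition~\ref{pro1} to convert each misclassification into a contribution of at least $\tau/2$ to the summed attribute Hamming loss, then bound that sum by $N_a\bar B$ per sample; the paper phrases this as a count $k\tau/2\leq\sum_{i}\sum_{m}\Delta(f^{(m)}(x_i),\mathbf{a}_{y_i}^{(m)})\leq N_uN_a\bar B$ over the test set while you phrase it as a pointwise Markov-type inequality followed by an expectation, which is the same argument. Your closing caveat about how $B_m$ should dominate the $0/1$ mismatch $\Delta$ is a fair observation --- the paper glosses over this by bounding $\Delta$ pointwise by $B_m$ --- but it does not change the structure of the proof.
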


\begin{proof}
	According to Proposition \ref{pro1}, for any incorrectly classified test sample $ x $ with label $y$, the distance between the predicted attribute representation $ f(x) $ and the true attribute representation $ \mathbf{a}_y $ is greater than $ {\tau}/{2} $, i.e.,
	\begin{equation}
	d(f(x),\mathbf{a}_y) = \sum_{m=1}^{N_a}\Delta (f^{(m)}(x),\mathbf{a}_y^{(m)}) \geq \frac{\tau}{2}.
	\end{equation}
	
	Let $ k $ be the number of incorrect image classifications for unseen test dataset $ D_{u}= \{( x_{i},y_{i}  ),i= 1,...,N_{u} \} $, we can obtain:
	\begin{equation}
	\begin{split}
	k\frac{\tau}{2} &\leq \sum_{i=1}^{N_u}\sum_{m=1}^{N_a}\Delta(f^{(m)}(x_i),\mathbf{a}_{y_i}^{(m)}) \\
	& \leq \sum_{i=1}^{N_u}\sum_{m=1}^{N_a}B_m = N_u N_a \bar{B},
	\end{split} 
	\end{equation}
	where $ \bar{B}=\frac{1}{N_a}\sum_{m=1}^{N_a}B_m $ and $ B_m $ is the upper bound of attribute prediction loss.
	
	Hence, the generalized error rate ${k}/{N_u}$ is bounded by $ {2N_a\bar{B}}/{\tau}$.
\end{proof}

\begin{remark}[Generalization error bound is positively correlated to the average attribute prediction loss]
	\label{remark_t1}
	\emph{From Theorem \ref{the1}, we can find that the generalization error bound of the attribute-based ZSL model depends on the number of attributes $ N_a $, minimum attribute distance $ \tau $ and average prediction loss $ \bar{B} $ for all the attribute classifiers. According to the Definition \ref{def1} and \ref{def2}, the minimum attribute distance $ \tau $ is positively correlated to the number of attributes $ N_a $. Therefore, the generalization error bound is mainly affected by the average prediction loss $ \bar{B} $. Intuitively, the inferior attributes with poor predictability cause greater prediction loss $ \bar{B} $, and consequently, these attributes will have negative effect on the ZSL performance and increase the generalization error rate.}
\end{remark}

\subsection{Improvement of Generalization after Attribute Selection}
It has been proven that the generalization error bound of ZSL model is affected by the average prediction loss $ \bar{B} $ in the previous section. In this section, we will prove that attribute selection can reduce the average prediction loss $ \bar{B} $, and consequently reduce the generalization error bound of ZSL from the perspective of PAC-style \citep{valiant1984theory} analysis.
\begin{lemma}[PAC bound of ZSL \citep{palatucci2009zero}]
	\label{lemma1}
	Given $ N_a $ attribute classifiers, to obtain an attribute classifier with $ (1-\delta) $ probability that has at most $ k_a $ incorrect predicted attributes, the PAC bound $ D $ of the attribute-based ZSL model is:
	\begin{equation}
	D \propto  \frac{N_a}{k_a}[4 \mathrm{log}(2/\delta)+8(d+1)\mathrm{log}(13N_a/k_a)],
	\end{equation}
	where $ d $ is the dimension of the image features.
\end{lemma}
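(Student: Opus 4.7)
The plan is to reduce Lemma \ref{lemma1} to the classical PAC sample-complexity bound for a single binary concept class and then lift that bound to the attribute-codeword setting. The template I would invoke is the Blumer--Ehrenfeucht--Haussler--Warmuth theorem: for a hypothesis class of VC dimension $h$, a training sample of size $m = \frac{1}{\epsilon}\bigl[4\mathrm{log}(2/\delta) + 8h\,\mathrm{log}(13/\epsilon)\bigr]$ suffices to learn, with probability at least $1-\delta$, a classifier whose true error is at most $\epsilon$. This is exactly the shape of the right-hand side of the lemma, so the proof reduces to choosing $h$ and $\epsilon$ correctly.

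First I would fix the per-attribute hypothesis class. Since each of the $N_a$ attribute classifiers is a linear discriminant on the $d$-dimensional image feature space $\mathcal{X}$, its VC dimension is $h = d+1$, which is the classical bound for hyperplane classifiers in $\mathbb{R}^d$ and is the only structural assumption the lemma makes on $f$.

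Next I would translate the codeword-level error budget into a per-classifier accuracy target. Tolerating at most $k_a$ incorrect attribute bits out of $N_a$ corresponds, by linearity of expectation over the $N_a$ coordinates of the predicted codeword, to demanding per-classifier error $\epsilon \leq k_a/N_a$. Substituting $h = d+1$ and $\epsilon = k_a/N_a$ into the template yields $D \propto \frac{N_a}{k_a}\bigl[4\mathrm{log}(2/\delta) + 8(d+1)\mathrm{log}(13N_a/k_a)\bigr]$, which matches the statement.

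The step that deserves the most care, and the reason the lemma is stated with $\propto$ rather than equality, is the confidence budget: ensuring that \emph{all} $N_a$ attribute classifiers simultaneously meet their per-classifier target calls for a union bound, replacing $\delta$ by $\delta/N_a$ and so perturbing the $\mathrm{log}(2/\delta)$ term by an additive $\mathrm{log} N_a$. Because this change is subleading and is absorbed into the proportionality constant, the three substitutions (VC dimension $d+1$, per-coordinate error $k_a/N_a$, union bound over attributes) together deliver the stated scaling. Since Lemma \ref{lemma1} is already established in \citet{palatucci2009zero}, the cleanest presentation is simply to outline these three ingredients and cite that derivation rather than re-prove the BEHW template from scratch.
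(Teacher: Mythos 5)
The paper itself offers no proof of Lemma~\ref{lemma1}: it is imported verbatim from \citet{palatucci2009zero} and used as a black box, so there is no internal argument to compare yours against. Your reconstruction is, however, essentially the derivation given in that cited source: the Blumer--Ehrenfeucht--Haussler--Warmuth sample-complexity template $\frac{1}{\epsilon}\bigl[4\mathrm{log}(2/\delta)+8h\,\mathrm{log}(13/\epsilon)\bigr]$, the choice $h=d+1$ for linear attribute classifiers on $d$-dimensional features (an assumption the lemma never states but which the $(d+1)$ factor betrays --- you are right to flag it as the only structural hypothesis in play), and the substitution $\epsilon = k_a/N_a$. So your three ingredients are the correct provenance of the bound, and citing the original derivation rather than re-proving BEHW is the appropriate level of detail here. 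One step deserves a caveat: passing from ``each classifier has true error at most $k_a/N_a$'' to ``at most $k_a$ incorrect attributes with probability $1-\delta$'' via linearity of expectation only controls the \emph{expected} number of wrong bits; a high-probability statement needs Markov or a concentration bound on top, costing a further constant. This looseness is present in the original source as well and is swallowed by the $\propto$, but it is worth acknowledging explicitly rather than folding it silently into the union-bound remark. With that caveat, your account is a faithful and correctly assembled justification of a statement the paper leaves entirely to the citation.
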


\begin{remark}[The average attribute prediction loss is positively correlated to the PAC bound]
	\label{remark_l1}
	\emph{Here, $ k_a/N_a $ is the tolerable prediction error rate of attribute classifiers. According to the definition of the average attribute prediction loss $ \bar{B} $, it is obvious that the ZSL model with smaller $ \bar{B}$ could tolerate a greater $ k_a/N_a $. From Lemma \ref{lemma1}, we can find that the PAC bound $ D $ is monotonically increasing with respect to $ N_a/k_a $. Hence, the PAC bound $ D $ decreases when the $ N_a/k_a $ decreases, and consequently the average prediction loss $ \bar{B} $ decreases.}
\end{remark}

\begin{lemma}[Test Error Bound \citep{vapnik2013nature}]
	\label{lemma2}
	Suppose that the PAC bound of the attribute-based ZSL model is $ D $. The probability of the test error distancing from an upper bound is given by:
	\begin{equation}
	p\Bigg(e_{ts} \leq e_{tr} +\sqrt{\frac{1}{N_s}\bigg[D\Big(\mathrm{log}\Big(\frac{2N_s}{D}\Big)+1\Big)-\mathrm{log}\Big(\frac{\eta }{4}\Big)\bigg]}\Bigg) = 1-\eta,
	\end{equation}
	where $ N_s $ is the size of the training set, $0 \leq \eta \leq 1$, and $ e_{ts} $,  $ e_{tr} $ are the test error and the training error respectively.
\end{lemma}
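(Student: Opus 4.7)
The plan is to derive this bound along the classical Vapnik--Chervonenkis route for uniform convergence of empirical risk to true risk, treating the PAC complexity $D$ from Lemma~\ref{lemma1} as the effective capacity measure of the attribute-based ZSL hypothesis class $\mathcal{H}$. The target statement is an assertion that, with probability at least $1-\eta$, the test error $e_{ts}$ does not exceed the training error $e_{tr}$ by more than an $O(\sqrt{D \log(N_s/D)/N_s})$ term, so the overall strategy is to control $\sup_{h\in\mathcal{H}}(e_{ts}(h)-e_{tr}(h))$ via symmetrization, growth-function bounds, and a Hoeffding-type exponential tail.

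First, I would identify $\mathcal{H}$ as the family of ZSL classifiers parameterized by the compatibility matrix $\mathbf{W}$ in Eq.~\eqref{eq:Flesori}, and associate with it the growth function $\Pi_{\mathcal{H}}(m)$ counting the maximum number of distinct $\{0,1\}$-labelings $\mathcal{H}$ can realize on $m$ points. Using Sauer's lemma together with Lemma~\ref{lemma1}, one argues that $\Pi_{\mathcal{H}}(2N_s) \le (2eN_s/D)^{D}$, so the effective cardinality of $\mathcal{H}$ on any sample of size $2N_s$ is at most $(2N_s/D)^{D}$ up to benign constants. This is the step that connects the PAC bound $D$ to the combinatorial richness of the attribute ensemble.

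Second, I would perform Vapnik's symmetrization: introduce an independent ghost sample of size $N_s$ from the same distribution as the test set, and bound
\begin{equation*}
P\Bigl(\sup_{h\in\mathcal{H}}(e_{ts}(h)-e_{tr}(h))>\varepsilon\Bigr) \le 2\,P\Bigl(\sup_{h\in\mathcal{H}}|\hat{e}_{tr}(h)-\hat{e}'_{tr}(h)|>\varepsilon/2\Bigr),
\end{equation*}
reducing the problem to controlling the gap between two empirical averages over the doubled sample. A random permutation/Rademacher argument then allows us to condition on the joint sample and apply a union bound over the at-most $(2N_s/D)^{D}$ effectively distinct hypotheses. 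For each fixed $h$, Hoeffding's inequality yields a tail of $2\exp(-N_s\varepsilon^2/4)$, so the overall probability is at most $4\,(2N_s/D)^{D}\exp(-N_s\varepsilon^2/4)$.

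Finally, setting this probability equal to $\eta$ and solving for $\varepsilon$ yields the stated form $\varepsilon=\sqrt{N_s^{-1}[D(\log(2N_s/D)+1)-\log(\eta/4)]}$, where the ``$+1$'' term absorbs the factor $e$ that appears in the Sauer--Shelah bound. The hard part will be the symmetrization step and the careful tracking of constants so that the displayed expression emerges exactly as written; a secondary difficulty is verifying that the ensemble of attribute classifiers underlying the bilinear compatibility function has growth behavior faithfully captured by the single scalar $D$ of Lemma~\ref{lemma1}, rather than by a per-classifier complexity that would yield a looser $N_a$-dependent constant.
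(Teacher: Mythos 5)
The paper does not prove this lemma at all: it is imported verbatim as Vapnik's classical VC generalization bound, with the citation to \citet{vapnik2013nature} serving as the entire justification. Your proposal therefore does more than the paper does --- it reconstructs the standard derivation that lives inside the cited reference (symmetrization with a ghost sample, Sauer--Shelah control of the growth function, a union bound over the effectively finite hypothesis class, an exponential tail, and solving $4\,\Pi_{\mathcal{H}}(2N_s)\exp(-N_s\varepsilon^2/4)=\eta$ for $\varepsilon$), and that outline is the right one for this statement. Two caveats are worth recording. First, the constants do not fall out as cleanly as your last step suggests: with a Hoeffding tail of $\exp(-N_s\varepsilon^2/4)$ you would obtain $\varepsilon = 2\sqrt{N_s^{-1}[D(\log(2N_s/D)+1)-\log(\eta/4)]}$, a factor of $2$ larger than the displayed bound; Vapnik's own derivation for $\{0,1\}$-valued loss uses a sharper deviation inequality to land on the stated form, and the lemma's ``$=1-\eta$'' should in any case be ``$\geq 1-\eta$''. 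Second, and more substantively, you correctly flag the real weak point: the quantity $D$ in Lemma~\ref{lemma1} is a PAC sample-complexity count from \citet{palatucci2009zero}, not a VC dimension of the bilinear compatibility class, so the Sauer's-lemma step that converts $D$ into a growth-function exponent is an identification the paper asserts implicitly and never justifies. Your honest acknowledgment of that gap is more careful than the paper's treatment, which simply substitutes $D$ into Vapnik's formula.
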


\begin{remark}[PAC bound is positively correlated to the test error bound]
	\label{remark_l2}
	\emph{From Lemma \ref{lemma2}, we can find that the PAC bound can affect the probabilistic upper bound on the test error. Specifically, to obtain a high probability with small test error, the PAC bound should be small. In other words, the model with smaller PAC bound would have a smaller test error bound.}
\end{remark}

\begin{proposition}[Bound Change after Attribute Selection]
	\label{theo2}
	For the attribute-based ZSL model, attribute selection can decrease the generalization error bound.
\end{proposition}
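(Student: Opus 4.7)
The plan is to chain together the sequence of lemmas and remarks established earlier in Section~\ref{sec_bound} so as to track how removing inferior attributes propagates through the PAC-style analysis to the test-error bound. Concretely, I would argue in three steps: (i) a suitably chosen attribute selection strictly reduces the average prediction loss $\bar{B}$; (ii) a reduced $\bar{B}$ reduces the PAC bound $D$ of Lemma~\ref{lemma1}; (iii) a reduced $D$ reduces the probabilistic upper bound on $e_{ts}$ supplied by Lemma~\ref{lemma2}, which is what we identify with the generalization error bound.

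For step (i), I would specify the selection rule that keeps only those attributes whose individual upper bound $B_m$ lies below the current average $\bar{B}$. Writing the new average over the selected subset $\mathcal{S}$ as $\bar{B}' = \frac{1}{|\mathcal{S}|}\sum_{m\in\mathcal{S}} B_m$, a one-line averaging argument gives $\bar{B}' < \bar{B}$ whenever at least one attribute strictly exceeds the mean loss; the degenerate case in which no such attribute exists is trivial, since then all attributes already contribute equally and no selection is needed. For step (ii), I would invoke the Remark following Lemma~\ref{lemma1}: a smaller $\bar{B}$ admits a larger tolerable attribute-error rate $k_a/N_a$, hence a smaller ratio $N_a/k_a$, and by the monotonicity of the expression for $D$ in $N_a/k_a$ this forces $D$ to shrink. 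Step (iii) is then immediate from the Remark after Lemma~\ref{lemma2}, since the right-hand side of the probability statement is monotonically increasing in $D$, so a smaller $D$ yields a smaller high-probability upper bound on $e_{ts}$.

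The main obstacle is reconciling this chain with the direct bound $2N_a\bar{B}/\tau$ of Theorem~\ref{the1}: removing attributes simultaneously decreases $N_a$ and may decrease the minimum codeword distance $\tau$, so controlling the ratio $N_a/\tau$ under selection is delicate. I would sidestep this by routing the argument through the PAC chain of Lemmas~\ref{lemma1}--\ref{lemma2} rather than through Theorem~\ref{the1} directly, because the resulting test-error bound depends on $\bar{B}$ only through $D$ and does not require tracking $\tau$. If a statement tied to Theorem~\ref{the1} is desired, I would invoke the Remark after Theorem~\ref{the1}, which asserts that $\tau$ scales positively with $N_a$, so that $N_a/\tau$ is essentially preserved and the dominant effect of selection is the strict decrease of $\bar{B}$. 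Either route delivers the conclusion that attribute selection decreases the generalization error bound, completing the proof of Proposition~\ref{theo2}.
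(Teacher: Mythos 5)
Your chain of remarks is the same one the paper uses, but you run it in the opposite direction, and that is a genuinely different argument. The paper's proof starts from the claim that IAS minimizes the empirical loss on the out-of-the-box data, whose distribution mimics the test data; it concludes that the test error (hence, via Remark~\ref{remark_l2}, the PAC bound $D$; hence, via Remark~\ref{remark_l1}, the average loss $\bar{B}$; hence, via Remark~\ref{remark_t1}, the bound $2N_a\bar{B}/\tau$ of Theorem~\ref{the1}) decreases. You instead start at the other end: you give an explicit threshold rule (keep the attributes with $B_m$ below the current mean) and a one-line averaging argument showing $\bar{B}$ strictly decreases, then push forward through Lemma~\ref{lemma1} to $D$ and through Lemma~\ref{lemma2} to the test-error bound. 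Your direction of inference is actually the logically cleaner one --- Lemmas~\ref{lemma1} and~\ref{lemma2} are stated as ``smaller $\bar{B}$ gives smaller $D$ gives smaller test-error bound,'' whereas the paper invokes their converses --- and your explicit handling of the $N_a/\tau$ ratio is more careful than the paper, which waves at it only in Remark~\ref{remark_t1}. What you lose is the connection to the actual algorithm: your step (i) analyzes a hypothetical $B_m$-thresholding rule, not the IAS procedure of Eq.~\eqref{eq:Lg}, and it never uses the out-of-the-box data at all, which is the whole point of the paper's construction; the paper's proof, informal as it is, is really an argument that \emph{this particular} selection procedure works because the generated data matches the test distribution. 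Since the proposition is phrased existentially (``attribute selection can decrease the bound''), your version suffices for the literal statement, but you should flag that it certifies a different selector than the one the paper actually deploys.
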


\begin{proof}
	In attribute selection, the key attributes are selected by minimizing the loss function in Eq. \eqref{eq_lossbasiczsl} on the out-of-the-box data. Since the generated out-of-the-box data has a similar distribution to the test data, the test error of ZSL will decrease after attribute selection, i.e. ZSLIAS has a smaller test error bound than the original ZSL model. Therefore, we can infer that ZSLIAS has a smaller PAC bound based on Remark \ref{remark_l2}. According to Remark \ref{remark_l1}, we can infer that the average prediction error $ \bar{B}$ decreases after attribute selection. As a consequence, the generalization error bound of ZSLIAS is smaller than the original ZSL model based on Remark \ref{remark_t1}.
\end{proof}

From Proposition \ref{theo2}, we can observe that the generalization error of ZSL model will decrease after adopting the proposed IAS. In other words, ZSLIAS have a smaller classification error rate comparing to the original ZSL method when generalizing to the unseen test data.

\section{IAS with Out-of-the-box Data}
\label{sec_ias}
Motivated by the generalization bound analyses, we select the key attributes based on the out-of-the-box data. In this section, we first present the proposed iterative attribute selection model. Then, we introduce the attribute-guided generative model designed to generate the out-of-the-box data. The complexity analysis of IAS is given at last.

\subsection{Iterative Attribute Selection Model}
Inspired by the idea of iterative machine teaching \citep{liu2017iterative}, we propose a novel iterative attribute selection model that iteratively selects attributes based on the generated out-of-the-box data. Firstly, we generate the out-of-the-box data to mimic test classes by an attribute-based generative model. Then, the key attributes are selected in an iterative manner based on the out-of-the-box data. After obtaining the selected attributes, we can consider them as a more efficient semantic representation to improve the original ZSL model. 

Suppose given the generated out-of-the-box data $D_{g} = \{( x_{n},y_{n} ), n= 1,...,N_{g}\}$, we can combine the empirical risk in Eq. \eqref{eq_lossbasiczsl} with the attribute selection model. Then the loss function is  rewritten as follows:
\begin{equation}
L\left ( y,f\left ( x;\mathbf{s},\mathbf{W} \right ) \right )  = \frac{1}{N_{g}}\sum_{n=1}^{N_{g}}l\left ( y_{n},f\left ( x_{n};\mathbf{s},\mathbf{W} \right ) \right )+\Omega \left ( \mathbf{W} \right ),
\label{eq:Lg}
\end{equation}
where $\mathbf{s}\in\left ( 0,1 \right )^{N_{a}}$ is the indicator vector for the attribute selection, in which $s_{i}=1$ if the $ i^{th} $ attribute is selected or $0$ otherwise. ${N_{a}}$ is the number of all the attributes.

Correspondingly, the mapping function $f$ in Eq. \eqref{eq:flesori} and the compatibility function $F$ in Eq. \eqref{eq:Flesori} can be rewritten as follows:
\begin{equation}
f\left ( x;\mathbf{s},\mathbf{W} \right )=\mathrm{\arg} \mathop{\max}_{y\in \mathcal{Y} } F\left ( x,y;\mathbf{s},\mathbf{W} \right ),
\end{equation}
\begin{equation}
F\left ( x,y;\mathbf{s},\mathbf{W} \right )=\theta \left ( x \right )^{T}\mathbf{W}\left ( \mathbf{s}\circ \varphi \left ( y \right ) \right ),
\label{eq:Fless}
\end{equation}
where $\circ$ is element-wise product operator (Hadamard product), $\mathbf{s}$ is the selection vector defined in Eq. \eqref{eq:Lg}.

To solve the optimization problem in Eq. \eqref{eq:Lg}, we need to specify the choice of the loss function $l\left ( \cdot \right )$. The loss function in Eq. \eqref{eq:Lg} for single sample $(x_n,y_n)$ is expressed as follows \citep{xian2018zero}:
\begin{align}
&l ( y_{n},f (  ( x_{n};\mathbf{s},\mathbf{W}  )  )  )\notag \\
\label{eq:LofLB} &= \sum_{y\in \mathcal{Y}_{g}}r_{ny} [ \triangle ( y_{n},y  )+F ( x_{n},y;\mathbf{s},\mathbf{W}  ) -F ( x_{n},y_{n};\mathbf{s},\mathbf{W}  ) ]_{+}  \\ 
&= \sum_{y\in \mathcal{Y}_{g}}r_{ny} [ \triangle( y_{n},y )+\theta ( x_{n} )^{T}\mathbf{W}( \mathbf{s}\circ \varphi (y) )-\theta ( x_{n} )^{T}\mathbf{W}( \mathbf{s}\circ \varphi ( y_{n} ) )   ]_{+}, \notag
\end{align}
where $ \mathcal{Y}_{g} $ is the label of generated out-of-the-box data, which is the same as  $ \mathcal{Y}_{u} $. \\ $\triangle(y_n;y)=0$ if $y_n = y$, 1 otherwise. $r_{ny}\in [0,1]$ is the weight defined in specific ZSL methods. 

Since the dimension of the optimal attribute subset (i.e. $l_0$-norm of $\mathbf{s}$) is agnostic, finding the optimal $\mathbf{s}$ is a NP-Complete \citep{garey1974some} problem. Therefore, inspired by the idea of iterative machine teaching \citep{liu2017iterative}, we adopt the greedy algorithm \citep{cormen2009introduction} to optimize the loss function in an iterative manner. Eq. \eqref{eq:Lg} gets updated during each iteration as follows:
\begin{equation}
\begin{split}
L^{t+1}=\frac{1}{N_{g}}\sum_{n=1}^{N_{g}} & l^{t+1}( y_{n},f ( x_{n};\mathbf{s}^{t+1},\mathbf{W}^{t+1}  )  )+\Omega  ( \mathbf{W}^{t+1}  ), \\
& s.t. \sum_{s_{i}\in \mathbf{s}^{t+1}}s_{i}=t+1 ,\\
& \;\;\sum_{s_{j}\in \left ( \mathbf{s}^{t+1}-\mathbf{s}^{t} \right )}s_{j}=1.
\label{eq:Lgt1}
\end{split}
\end{equation}
The constraints on $\mathbf{s}$ ensure that $\mathbf{s}^{t}$ updates one element (from 0 updates to 1) during each iteration, which indicates that only one attribute is selected each time. $\mathbf{s}^{0}$ is the initial vector of all 0's.

Correspondingly, the loss function in Eq. \eqref{eq:Lgt1} for single sample $(x_n,y_n)$ gets updated during each iteration as follows: 
\begin{equation}
\begin{split}
l^{t+1} = \sum_{y\in \mathcal{Y}_{g}} r_{ny} [ & \triangle ( y_{n},y  )+\theta  ( x_{n}  )^{T}\mathbf{W}^{t+1} ( \mathbf{s}^{t+1} \circ \varphi  ( y  )  )  \\
&\;\; -\theta  ( x_{n}  )^{T}\mathbf{W}^{t+1} ( \mathbf{s}^{t+1}\circ \varphi  ( y_{n}  )  )   ]_{+}.
\end{split}	
\label{eq:LofLBt1}
\end{equation}
Here $ l^{t+1} $ subjects to the same constrains as Eq. \eqref{eq:Lgt1}.

To minimize the loss function in Eq. \eqref{eq:Lgt1}, we can alternatively optimize $\mathbf{W}^{t+1}$ and $\mathbf{s}^{t+1}$ by optimizing one variable while fixing the other one. In each iteration, we firstly optimize $\mathbf{W}^{t+1}$ via the gradient descent algorithm \citep{burges2005learning}. The gradient of Eq. \eqref{eq:Lgt1} is calculated as follows:
\begin{equation}
\frac{\partial L^{t+1}}{\partial \mathbf{W}^{t+1}} = \frac{1}{N_{g}}\sum\limits_{n=1}^{N_{g}} \frac{\partial l^{t+1}}{\partial \mathbf{W}^{t+1}} + \frac{1}{2}\alpha\mathbf{W}^{t+1},
\label{eq:LdWt}
\end{equation}
where
\begin{equation}
\frac{\partial l^{t+1}}{\partial \mathbf{W}^{t+1}}=\sum_{y\in \mathcal{Y}_{g}}r_{ny}\theta (x_{n})^{T}(\mathbf{s}^{t}\circ (\varphi (y)-\varphi (y_n))),
\label{eq:ldWt}
\end{equation}
where $ \alpha $ is the regularization parameter.

After updating $\mathbf{W}^{t+1}$, we can traverse all the elements equal to $0$ in $\mathbf{s}^{t}$, and turn them into 1 respectively. Then $\mathbf{s}^{t+1}$ is updated by the optimal $\mathbf{s}^{t+1}$ which achieves the minimal loss of Eq. \eqref{eq:Lgt1}:
\begin{equation}
\label{updates}
\mathbf{s}^{t+1} = \mathrm{\arg}\mathop{\min}_{\mathbf{s}^{t+1}}\frac{1}{N_{g}}\sum_{n=1}^{N_{g}}  l^{t+1}( y_{n},f ( x_{n};\mathbf{s}^{t+1},\mathbf{W}^{t+1}  )  )+\Omega  ( \mathbf{W}^{t+1}  ),
\end{equation}

When iterations end and $\mathbf{s}$ is obtained, we can easily get the subset of key attributes by selecting the attributes corresponding to the elements equal to 1 in the selection vector $\mathbf{s}$. 

The procedure of the proposed IAS model is given in Algorithm \ref{alg:zslias}.

\begin{algorithm}[!t]
	\caption{Iterative Attribute Selection Model}
	\begin{algorithmic}[1]
		\REQUIRE ~~\\
		The generated out-of-the-box data $D_g$; \\
		Original attribute set $\mathcal{A}$; \\
		Iteration stop threshold $ \varepsilon $. \\
		\ENSURE ~~\\
		Subset of selected attributes $\mathcal{S}$. \\		
		\vspace{2mm}
		\STATE Initialization: $\mathbf{s}^0 = \mathbf{0}$, randomize $ \mathbf{W}^0 $;
		\STATE \textbf{for} $ t=0 $ to $ N_a - 1 $ \textbf{do}
		\STATE \quad \quad  $ L^t = \frac{1}{N_{g}}\sum\limits_{n=1}^{N_{g}} l^{t}( y_{n},f ( x_{n};\mathbf{s}^{t},\mathbf{W}^{t}  )  )+\Omega  ( \mathbf{W}^{t}  )$ (Eq. \eqref{eq:Lgt1})
		\STATE \quad \quad  $ \frac{\partial L^{t}}{\partial \mathbf{W}^{t}} = \frac{1}{N_{g}}\sum\limits_{n=1}^{N_{g}} \frac{\partial l^{t}}{\partial \mathbf{W}^{t}}  + \frac{1}{2}\alpha\mathbf{W}^t$ (Eq. \eqref{eq:LdWt})
		\STATE \quad  // {Update} $ \mathbf{W} $
		\STATE \quad \quad $ \mathbf{W}^{t+1} = \mathbf{W}^{t} - \eta_t \frac{\partial L^{t}}{\partial \mathbf{W}^{t}}$ 
		\STATE \quad  // {Update} $ \mathbf{s} $ 
		\STATE \quad \quad  $ \mathbf{s}^{t+1} = \mathrm{\arg}\mathop{\min}\limits_{\mathbf{s}^{t+1}}L^{t+1} $ (Eq. \eqref{updates})
		
		\STATE \quad \quad \textbf{if} $ |L^{t+1} - L^{t}| \leq \varepsilon  $
		\STATE \quad \quad \quad \quad \textbf{Break;}
		\STATE \quad \quad \textbf{end if}
		\STATE \textbf{end for}
		\STATE Obtain the subset of selected attributes:  $\mathcal{S}=\mathbf{s} \circ \mathcal{A}$.
	\end{algorithmic} 
	\label{alg:zslias}	
\end{algorithm}

\subsection{Generation of Out-of-the-box Data}
In order to select the discriminative attributes for test classes, we should do attribute selection on the test data. Since the training data and the test data are located in the different boxes bounded by the attributes, we adopt an attribute-based generative model \citep{bucher2017generating} to generate out-of-the-box data to mimic test classes. Comparing to the ZSLAS, the key attributes selected by IAS based on the out-of-the-box data can be more efficiently generalized to test data.

Conditional variational autoencoder (CVAE)~\citep{sohn2015learning} is a conditional generative model in which the latent codes and generated data are both conditioned on some extra information. In this work, we propose the attribute-based variational autoencoder (AVAE), a special version of CVAE with tailor-made attributes, to generate the out-of-the-box data.

VAE \citep{kingma2013auto} is a directed graphical model with certain types of latent variables. The generative process of VAE is as follows: a set of latent codes $z$ is generated from the prior distribution $p(z)$, and the data $x$ is generated by the generative distribution $p(x|z)$ conditioned on $z: z \sim  p(z)$, $x \sim p(x|z)$. The empirical objective of VAE is expressed as follows \citep{sohn2015learning}:
\begin{equation}
L_{\mathrm{VAE}}(x )=-\mathrm{KL}(q(z|x)\parallel p(z))+\frac{1}{L}\sum_{l=1}^{L}\mathrm{log} p(x|z^{(l)}),
\label{eq:vae}
\end{equation}
where $z^{(l)}=g(x,\epsilon ^{(l)})$, $\epsilon ^{(l)}\sim \mathcal{N}(\mathbf{0},\mathbf{I})$. $q(z|x)$ is the recognition distribution which is reparameterized with a deterministic and differentiable function $ g(\cdot,\cdot) $ \citep{sohn2015learning} . $ \mathrm{KL} $ denotes the Kullback-Leibler divergence \citep{kullback1987letter} between the incorporated distributions. $ L $ is the number of samples.

Combining with the condition, i.e. the attribute representation of labels, the empirical objective of the AVAE is defined as follows:
\begin{equation}
\begin{split}
L_{\mathrm{AVAE}}(x,\varphi (y))=-\mathrm{KL}(q(z|x,\varphi (y))\parallel p(z|\varphi (y))) \;+\frac{1}{L}\sum_{l=1}^{L}\mathrm{log} p(x|\varphi (y),z^{(l)}),
\end{split}
\label{eq:cvae}
\end{equation}
where $z^{(l)}=g(x,\varphi (y),\epsilon ^{(l)})$, $ \varphi \left ( y \right ) $ is the attribute representation of label $ y $.

In the encoding stage, for each training data point $x^{(i)}$, we estimate the $ q(z^{(i)}|x^{(i)}, \\ \varphi (y^{(i)})) = Q(z) $ using the encoder. In the decoding stage, after inputting the concatenation of the $\tilde{z}$ sampled from the $Q(z)$ and the attribute representation $\varphi (y_{u})$, the decoder will generate a new sample $x_{g} $ with the same attribute representation as the unseen class $\varphi (y_{u})$.

The procedure of AVAE is illustrated in Figure \ref{fig_cvae}. At training time, the attribute representation (of training classes) whose image is being fed in is provided to the encoder and decoder. To generate an image of a particular attribute representation (of test classes), we can just feed this attribute vector along with a random point in the latent space sampled from a standard normal distribution. The system no longer relies on the latent space to encode what object you are dealing with. Instead, the latent space encodes attribute information. Since the attribute representations of test classes are fed into the decoder at generating stage, the generated out-of-the-box data $D_g$ has a similar distribution to the test data.

\begin{figure}[t] 
	\hfill
	\begin{center}
		\includegraphics[width = 0.9\textwidth]{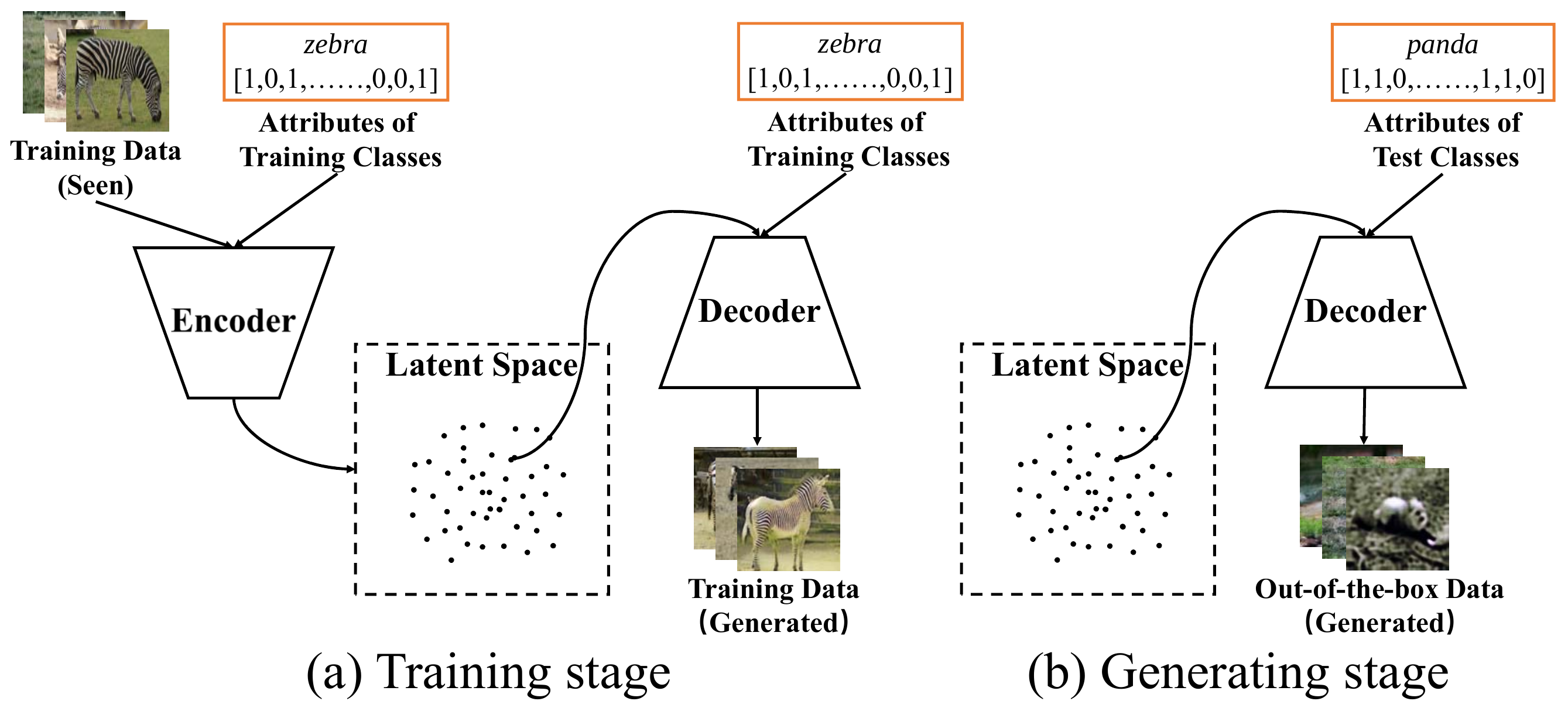}  
	\end{center}
	\caption{The framework of AVAE. (a) Training stage, (b) Generating stage.}
	\label{fig_cvae}
\end{figure}

\subsection{Complexity Analysis}
Suppose that there are $N_u$ unseen samples belonging to $L$ test classes, and the number of all the attributes is $N_a$. The complexity of original ZSL model is $\mathcal{O}_{\mathrm{ZSL}} \sim  \mathcal{O}(N_uN_aL^2)$. For the proposed ZSLIAS, the complexity of training stage is $\mathcal{O}_{\mathrm{ZSLIAS}} \sim N_a(N_a+1)/2\cdot\mathcal{O}_{\mathrm{ZSL}} $, i.e. $ \mathcal{O}(N_uN_a^3L^2)$, and the complexity of test stage is equal to  $\mathcal{O}_{\mathrm{ZSL}}$, i.e. $\mathcal{O}(N_uN_aL^2)$.

\section{Experiments}
\label{sec_exper}
To evaluate the performance of the proposed iterative attribute selection model, extensive experiments are conducted on four standard datasets with ZSL setting. In this section, we first compare the proposed approach with the state-of-the-art, and then give detailed analyses.

\subsection{Experimental Settings}
\subsubsection{Dataset} We conduct experiments on four standard ZSL datasets: (1) Animal with Attribute (AwA)~\citep{lampert2014attribute}, (2) attribute-Pascal-Yahoo (aPY)~\citep{farhadi2009describing}, (3) Caltech-UCSD Bird 200-2011 (CUB)~\citep{welinder2010caltech}, and (4) SUN Attribute Database (SUN) \citep{patterson2012sun}. The overall statistic information of these datasets is summarized in Table \ref{table_datasetstat}.

\begin{table*}[t]
	\caption{Statistic Information of Four Datasets (AwA, aPY, CUB and SUN) with Two Dataset Splits (SS and PS).}
	\centering
	\renewcommand\arraystretch{1}
	\setlength{\tabcolsep}{1mm}{
		\begin{tabular}{|c||c|ccc|cc|cc|}
			\hline
			\multirow{2}{*}{\textbf{Dataset}} & \multirow{2}{*}{\textbf{\#Attributes}} & \multicolumn{3}{c|}{\textbf{Classes}} & \multicolumn{2}{c|}{\textbf{Images (SS)}} & \multicolumn{2}{c|}{\textbf{Images (PS)}} \\
			{} & {} & \textbf{\#Total} &\textbf{\#Training} & \textbf{\#Test} & \textbf{\#Training} & \textbf{\#Test} &\textbf{\#Training} & \textbf{\#Test} \\ \hline\hline			
			{AwA } & {85} & {50} & {40} & {10} & {24295} & {6180} & {19832}  & {5685} \\ 
			{aPY } & {64} & {32} & {20} & {12}  & {12695} & {2644}  & {5932}  & {7924} \\ 
			{CUB } & {312} & {200} & {150} & {50}  & {8855} & {2933} & {7057}  & {2967} \\
			{SUN } & {102} & {717} & {645} & {72}  & {12900} & {1440}  & {10320}  & {1440} \\
			\hline			
		\end{tabular}
	}
	\label{table_datasetstat}	
\end{table*}

\subsubsection{Dataset Split} Zero-shot learning assumes that training classes and test classes are disjoint. Actually, ImageNet, the dataset exploited to extract image features via deep neural networks, may include some test classes. Therefore, \citet{xian2018zero} proposed a new dataset split (PS) ensuring that none of the test classes appears in the dataset used to train the extractor model. In this paper, we evaluate the proposed model using both splits, i.e., the original standard split (SS) and the proposed split (PS).

\subsubsection{Image Feature} Deep neural network feature is extracted for the experiments. Image features are extracted from the entire images for AwA, CUB and SUN datasets, and from bounding boxes mentioned in~\citet{farhadi2009describing} for aPY dataset, respectively. The original ResNet-101~\citep{he2016deep} pre-trained on ImageNet with 1K classes is used to calculate 2048-dimensional top-layer pooling units as image features.

\subsubsection{Attribute Representation} Attributes are used as the semantic representation to transfer information from training classes to test classes. We use 85, 64, 312 and 102-dimensional continuous value attributes for AwA, aPY, CUB and SUN datasets, respectively.

\subsubsection{Evaluation protocol} 
Unified dataset splits shown in Table \ref{table_datasetstat} are used for all the compared methods to get fair comparison results. Since the dataset is not well balanced with respect to the number of images per class \citep{xian2018zero}, we use the mean class accuracy, i.e. per-class averaged top-1 accuracy, as the criterion of assessment. Mean class accuracy is calculated as follows: 
\begin{equation}
acc=\frac{1}{L} \sum_{y\in\mathcal{Y}_u}\frac{\# \mathrm{correct\; predictions\; in\;} y}{ \# \mathrm{samples\; in\;} y},
\end{equation}
where $L$ is the number of test classes, $\mathcal{Y}_u$ is the set comprised of all the test labels.

\subsection{Comparison with the State-of-the-Art}
To evaluate the efficiency of the proposed iterative attribute selection model, we modify several latest ZSL baselines by the proposed IAS and compare them with the state-of-the-art.

We modify seven representative ZSL baselines to evaluate the IAS model, including three popular ZSL baselines (i.e. DAP \citep{lampert2014attribute}, LatEm \citep{xian2016latent} and SAE \citep{kodirov2017semantic}) and four latest ZSL baselines (i.e. MFMR \citep{xu2017matrix}, GANZrl \citep{tong2018adversarial}, fVG \citep{xian2019f} and LLAE \citep{li2019zero}).

The improvement achieved on these ZSL baselines is summarized in Table \ref{table_comparison}. It can be observed that IAS can significantly improve the performance of attribute-based ZSL methods. Specifically, the mean accuracies of these ZSL methods on four datasets (i.e. AwA, aPY, CUB and SUN) are increased by $11.09\%$, $15.97\%$, $9.10\%$, $5.11\%$, respectively ($10.29\%$ on average) after using IAS. For DAP on AwA and aPY datasets, LatEm on AwA dataset, IAS can improve their accuracy by greater than $20\%$, which demonstrates that IAS can significantly improve the performance of ZSL models. Interestingly, SAE performs badly on aPY and CUB datasets, while the accuracy rises to an acceptable level (from $8.33\%$ to $38.53\%$, and from $24.65\%$ to $42.85\%$, respectively) by using IAS. Even though the performance of state-of-the-art baselines is pretty well, IAS can still improve them to some extent ($5.48\%$, $3.24\%$, $2.80\%$ and $3.64\%$ on average for MFMR, GANZrl, fVG and LLAE respectively). These results demonstrate that the proposed iterative attribute selection model makes sense and can effectively improve existing attribute-based ZSL methods. This also proves the necessity and effectiveness of attribute selection for ZSL tasks.

\begin{table*}[t]
	\small
	\centering
	\renewcommand\arraystretch{1}
	\caption{Zero-Shot Classification Accuracy Comparison on Benchmarks. Numbers in Brackets are Relative Performance Gains. `-' Indicates that no Reported Results are Available.}
	\label{table_comparison}
	\scalebox{0.72}{
		\setlength{\tabcolsep}{0.5mm}{	
			\begin{tabular}{|c||cc|cc|cc|cc|}
				\hline
				\multirow{2}{*}{\textbf{Methods}} &\multicolumn{2}{c|}{\textbf{AwA}} & \multicolumn{2}{c|}{\textbf{aPY}} & \multicolumn{2}{c|}{\textbf{CUB}} & \multicolumn{2}{c|}{\textbf{SUN}} \\
				& \textbf{SS}          & \textbf{PS}         & \textbf{SS}          & \textbf{PS}         & \textbf{SS}          & \textbf{PS}         & \textbf{SS}         & \textbf{PS}         \\ \hline\hline
				DAP${\ddagger}$ & 64.44          & 46.22         & 35.73          & 39.67         & 43.47          & 40.23         & 41.25         & 45.83         \\
				DAP+AS${\dagger}$ & -          & 48.29         & -          & 34.87         & -            & 41.55          & -          & 42.27          \\
				DAP+IAS & 86.65(+\textit{22.21})           &  {71.88}(+\textit{25.66})         &  57.12(+\textit{21.39})         & 43.06(+\textit{3.39})          &   55.35(+\textit{11.88})         & {54.22}(+\textit{13.99})          & 47.85(+\textit{6.60})          &  50.56(+\textit{4.73})         \\ \hline
				LatEm${\ddagger}$& 71.51           & 48.33          & 24.43           &  34.66         & 50.38           & 48.57          & 58.75          & 55.13          \\
				LatEm+AS${\dagger}$ & -           & 59.07          & -           & 38.82         &   -         &  52.82         &  -         & 58.09          \\
				LatEm+IAS & 81.83(+\textit{10.32})           & 67.13(+\textit{18.80})          & 47.22(+\textit{22.79})           & {48.36}(+\textit{13.70})          & {56.05}(+\textit{5.67})           &  52.14(+\textit{3.57})         & {59.03}(+\textit{0.28})          & 56.18(+\textit{1.05})          \\ \hline
				SAE${\ddagger}$ &  79.19          &  48.48         &  8.33          & 8.33          &  26.41          & 24.65          & 36.94          &   32.78        \\
				SAE+IAS &  {87.95}(+\textit{8.76})          & 70.36(+\textit{21.88})          &  45.90(+\textit{37.57})          & 38.53(+\textit{30.20})          & 48.21(+\textit{21.80})           & 42.85(+\textit{18.20})          & 45.14(+\textit{8.20})          &  42.22(+\textit{9.44})         \\ \hline
				MFMR${\ddagger}$ &86.06            & 68.04          & 52.16           &34.09           & 43.09           & 39.55          &  50.49         & 53.33          \\
				MFMR+IAS & 87.10(+\textit{1.04}) & 71.37(+\textit{3.33}) &{58.51}(+\textit{6.35}) &37.67(+\textit{3.58}) &51.40(+\textit{8.31}) &47.89(+\textit{8.34}) &58.47(+\textit{7.98}) &{58.26}(+\textit{4.93}) \\ \hline
				GANZrl${\dagger}$ &86.23 &- &- &- &62.56 &- &- &-  \\
				GANZrl+IAS &88.51(+\textit{2.28}) &- &- &- &66.76(+\textit{4.20}) &- &- &-   \\ \hline				
				fVG${\dagger}$ &- &70.30 &- &- &- &72.90 &- &65.60          \\
				fVG+IAS &- &74.28(+\textit{3.98}) &- &- &- &74.53(+\textit{1.63}) &- &68.39(+\textit{2.79}) \\ \hline							
				LLAE${\dagger}$ &85.24 &- &56.16 &- &61.93  &- &-  &-     \\
				LLAE+IAS &88.95(+\textit{3.71})  &- &60.88(+\textit{4.72}) &- &64.42(+\textit{2.49}) &-  &-  &-  \\ \hline
				\multicolumn{9}{l}{${\dagger}$:Results published in the paper. \quad ${\ddagger}$:Results reproduced.}
			\end{tabular}
	}}
\end{table*}

As a similar work to ours, ZSLAS selects attributes based on the distributive entropy and the predictability of attributes. Thus, we compare the improvement of IAS and ZSLAS on DAP and LatEm, respectively. In Table \ref{table_comparison}, it can be observed that ZSLAS can improve existing ZSL methods, while IAS can improve them by a greater level ($2.15\%$ vs $10.61\%$ on average). Compared to ZSLAS, the advantages of ZSLIAS can be interpreted in two aspects. Firstly, ZSLIAS selects attributes in an iterative manner, hence it can select a more optimal subset of key attributes than ZSLAS that selects attributes at once. Secondly, ZSLAS is conducted based on the training data, while ZSLIAS is conducted based on the out-of-the-box data which has a similar distribution to the test data. Therefore, attributes selected by ZSLIAS is more applicable and discriminative for test data. Experimental results demonstrate the significant superiority of the proposed IAS model over previous attribute selection models.

\subsection{Detailed Analysis}
In order to further understand the promising performance, we analyze the following experimental results in detail.

\subsubsection{Evaluation on the Out-of-the-box Data} 
In the first experiment, we evaluate the out-of-the-box data generated by a tailor-made attribute-based deep generative model. Figure \ref{fig_tsne} shows the distribution of the out-of-the-box data and the real test data sampled from AwA dataset using t-SNE. Note that the out-of-the-box data in Figure \ref{fig_tsne}(b) is generated only based on the attribute representation of unseen classes, and without extra information of any test images. It can be observed that the generated out-of-the-box data can capture a similar distribution to the real test data, which guarantees that the selected attributes can be effectively generalized to test data.

\begin{figure}[t] 
	\hfill
	\begin{center}
		\includegraphics[width = 0.8\textwidth]{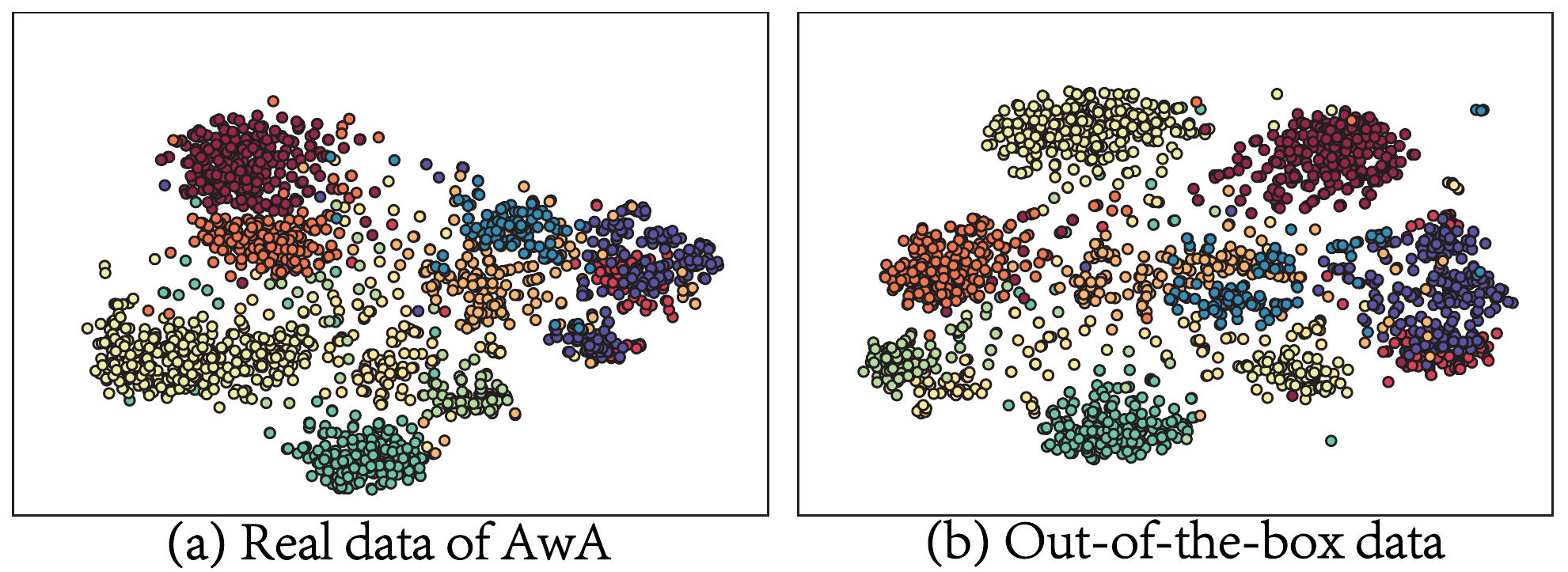}  
	\end{center}
	\caption{T-SNE visualization of the generated out-of-the-box data and real test data of AwA.}
	\label{fig_tsne}
\end{figure}

We also quantitatively evaluate the out-of-the-box data by calculating various distances between three distributions, i.e. the generated out-of-the-box data ($ \mathcal{X}_g $), unseen test data ($ \mathcal{X}_u $) and seen training data ($ \mathcal{X}_s $), in pairs. Table \ref{table_distributionDistance} shows the distribution distances measured by Wasserstein Distance \citep{vallender1974calculation}, KL Divergence \citep{kullback1987letter}, Hellinger Distance \citep{beran1977minimum} and Bhattacharyya Distance \citep{kailath1967divergence}, respectively. It is obvious that the distance between $ \mathcal{X}_g $ and $ \mathcal{X}_u $ is much less than the distance between $ \mathcal{X}_u $ and $ \mathcal{X}_s $, which means that the generated out-of-the-box data has a similar distribution to the unseen test data compared to the seen data. Therefore, attributes selected based on the out-of-the-box data are more discriminative for test data comparing to attributes selected based on training data.

\begin{table}[t]
	\small
	\centering
	\renewcommand\arraystretch{1}
	\caption{Distances Between Different Data Distributions. $ \mathcal{X}_g $ Indicates the Generated Out-of-the-Box Data, $ \mathcal{X}_u $ Indicates the Unseen Test Data and $ \mathcal{X}_s $ Indicates the Seen Training Data.}
	\label{table_distributionDistance}
	\setlength{\tabcolsep}{2mm}{
		\begin{tabular}{|c||c|c|c|}
			\hline
			\textbf{Metrics} &$ \mathcal{X}_g \sim \mathcal{X}_u$ &$ \mathcal{X}_g \sim \mathcal{X}_s$ &$ \mathcal{X}_s \sim \mathcal{X}_u$ \\ \hline\hline
			Wasserstein Distance &\textbf{5.99} &19.09 &18.97 \\
			KL Divergence &\textbf{0.321} &0.630 &0.703 \\
			Hellinger Distance  &\textbf{7.78} &16.87 &17.15 \\
			Bhattacharyya Distance  &\textbf{0.0808} &0.159 &0.176 \\
			\hline
	\end{tabular}}
\end{table}

We illustrate some generated images of unseen classes (i.e. \textit{panda} and \textit{seal}) and annotate them the corresponding attribute representations as shown in Figure \ref{figure_genimg}. Numbers in black indicate the attribute representations of the labels of real test images. Numbers in red and green are the correct and the incorrect attribute values of generated images, respectively. We can see that the generated images have the similar attribute representation as test images. Therefore, the tailor-made attribute-based deep generative model can generate the out-of-the-box data which captures a similar distribution to the unseen data.

\begin{figure*}[t] 
	\hfill
	\begin{center}
		\includegraphics[width = 0.95\textwidth]{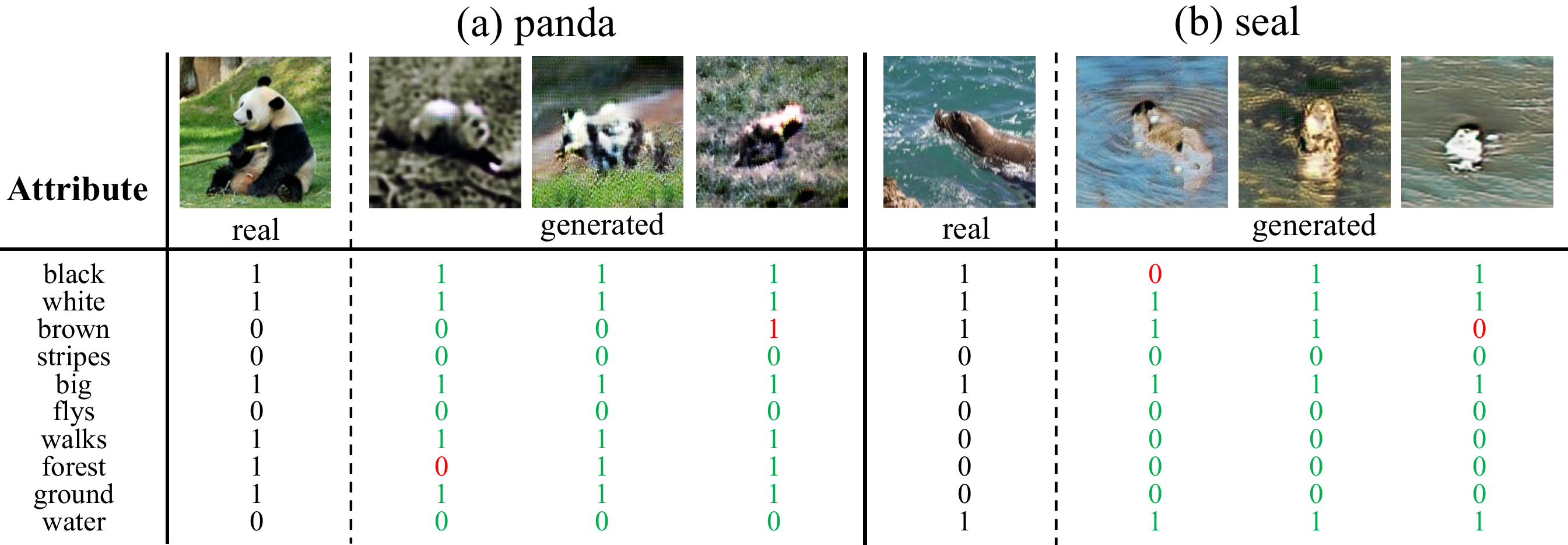}  
	\end{center}
	\caption{Visualization of generated out-of-the-box images and their attribute representation. The first column of both (a) and (b) is the real image derived from AwA dataset. The remaining three columns of both parts are randomly selected from the generated data. Numbers in black are the ground-truth attributes of the real image. Numbers in green and red are the correct and the incorrect attribute values of the generated images, respectively.}
	\label{figure_genimg}
\end{figure*}

\subsubsection{Effectiveness of IAS} 
In the second experiment, we compare the performance of three ZSL methods (i.e. DAP, LatEm and SAE) after using IAS on four datasets, respectively. The accuracies with respect to the number of selected attributes are shown in Figure \ref{fig_24}. On AwA, aPY and SUN datasets, we can see that the performance of these three ZSL methods increases sharply when the number of selected attributes grows from $0$ to about $20\%$, and then reaches the peak. These results suggest that only about a quarter of attributes are the key attributes which are necessary and effective to classify test objects. In Figure \ref{fig_24}(b) and \ref{fig_24}(f), there is an interesting result that SAE performs badly on aPY dataset with both SS and PS (the accuracy is less than $10\%$), while the performance is acceptable after using IAS (the accuracy is about $40\%$). These results demonstrate the effectiveness and robustness of IAS for ZSL tasks.

\begin{figure*}[t] 
	\hfill
	\begin{center}
		\includegraphics[width = 0.95\textwidth]{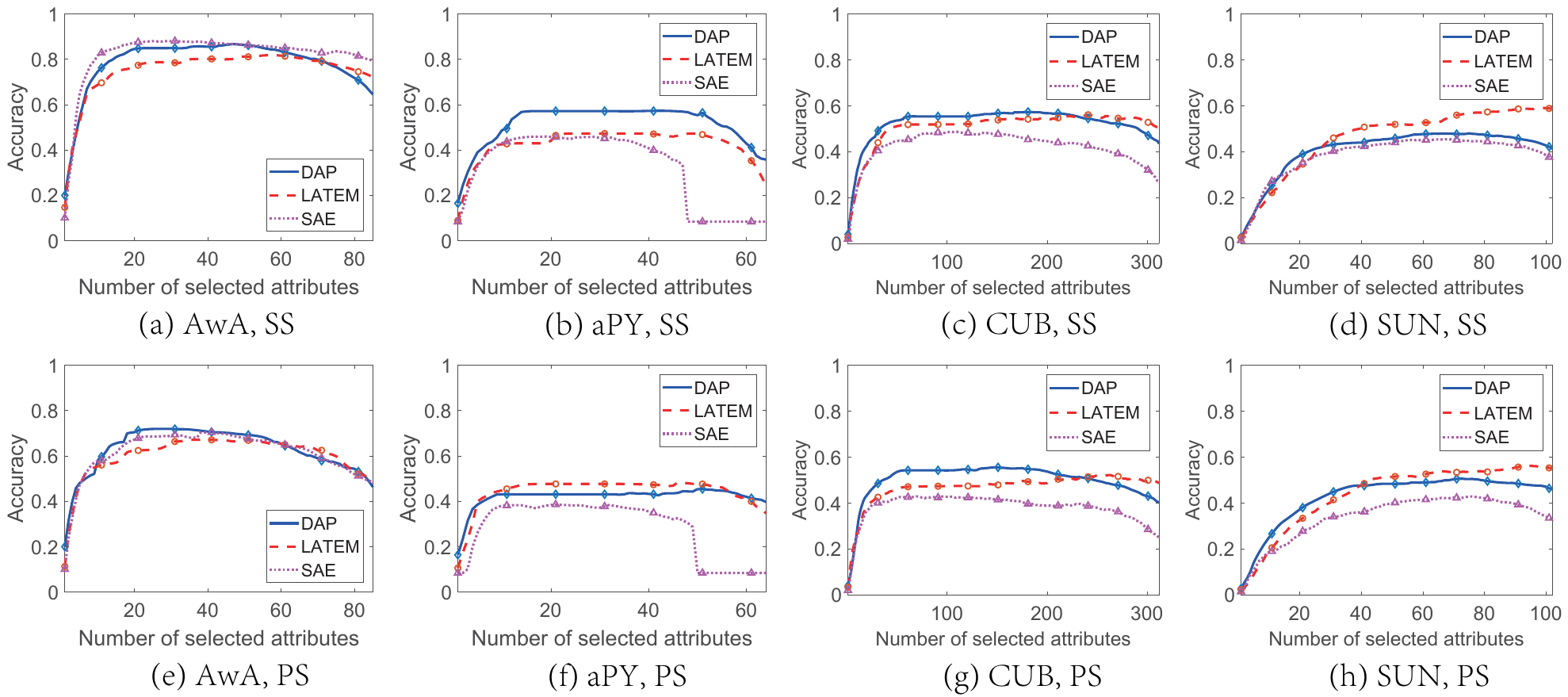}  
	\end{center}
	\caption{Performance of IAS for DAP, LatEm and SAE. The performance of baselines without IAS is shown on the rightmost side of the curves.}
	\label{fig_24}
\end{figure*}

Furthermore, we modify DAP by using all the attributes ($\#84$), using the selected attributes ($\#20$) and using the remaining attributes ($\#64$) after attribute selection, respectively. The resulting confusion matrices of these three variants evaluated on AwA dataset with proposed split setting are illustrated in Figure \ref{fig_confusion}. The numbers in the diagonal area (yellow patches) of confusion matrices indicate the classification accuracy per class. It is obvious that IAS can significantly improve DAP performance on most of the test classes, and the accuracies on some classes nearly doubled after using IAS, such as \textit{horse}, \textit{seal}, and \textit{giraffe}. Even though some objects are hard to be recognized by DAP, like \textit{dolphin} (the accuracy of DAP is $1.6\%$), we can get an acceptable performance after using IAS (the accuracy of DAPIAS is $72.7\%$). The original DAP only performs better than IAS with regard to the object \textit{blue whale}, this is because in the original DAP, most of the marine creatures (such as \textit{blue whale}, \textit{walrus} and \textit{dolphin}) are classified as the blue whale, which increases the classification accuracy while also increasing the false positive rate. More importantly, the confusion matrix of DAPIAS contains less noise (i.e. smaller numbers in the side regions (white patches) of confusion matrices apart from the diagonal area) than DAP, which suggests that DAPIAS has less prediction uncertainties. In other words, adopting IAS can improve the robustness of attribute-based ZSL methods.

\begin{figure*}[t] 
	\hfill
	\begin{center}
		\includegraphics[width = 0.95\textwidth]{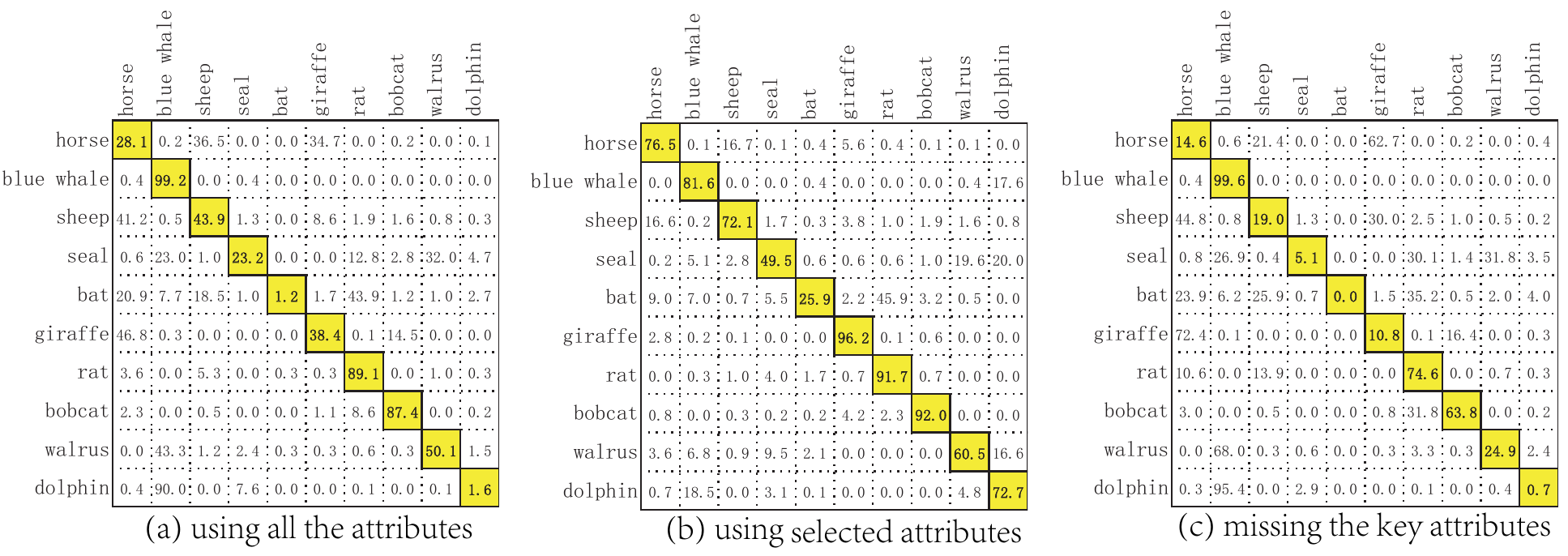}  
	\end{center}
	\caption{Confusion matrices (in \%) between 10 test classes on AwA dataset with proposed split. (a) DAP using all the original attributes; (b) DAP using the key attributes selected by IAS; (c) DAP using the remaining attributes after selection. }
	\label{fig_confusion}
\end{figure*}

In Figure \ref{fig_confusion}, the accuracy of using the selected attributes ($71.88\%$ on average) is significantly improved comparing to the accuracy of using all the attributes ($46.23\%$ on average), and the accuracy of using the remaining attributes ($31.32\%$ on average) is extremely terrible. These results suggest that the selected attributes are the key attributes for discriminating test data. The missing attributes are useless and even have a negative impact on the ZSL system. Therefore, it is obvious that not all the attributes are effective for ZSL tasks, and we should select the key attributes to improve performance.

\begin{table*}[t]
	\small
	\centering
	\renewcommand\arraystretch{1}
	\caption{Subsets of the Key Attributes Selected by DAP, LatEm and SAE on AwA Dataset(20 Attributes Selected Out of 85 Attributes). Attributes Appear in All Three Methods are in Boldface, and Appear in Two Methods are in Italics.}
	\label{table_att}
	\setlength{\tabcolsep}{1mm}{
		\begin{tabular}{|cc|cc|cc|}
			\hline
			\multicolumn{2}{|c|}{\textbf{DAP}} &\multicolumn{2}{|c|}{\textbf{LatEm}} &\multicolumn{2}{|c|}{\textbf{SAE}} \\ \hline\hline
			\textbf{ground} &\textbf{fish} &\textbf{hands} &\textbf{pads} &\textbf{black} &\textbf{paws} \\
			\textbf{hands} &\textbf{fields} &\textbf{ground} &\textbf{forest} &\textbf{ground} &ocean \\
			plains &smelly &bipedal &\textbf{gray} &\textbf{pads} &\textbf{yellow} \\
			\textit{tunnels} &\textbf{pads} &claws &coastal &\textbf{gray} &group \\
			\textbf{forest} &\textbf{yellow} &\textbf{black} &\textbf{yellow} &\textbf{hands} &\textit{tunnels} \\
			\textbf{tail} &\textbf{scavenger} &\textbf{fish} &strainteeth &\textbf{hooves} &\textit{white} \\
			\textbf{gray} &{swims} &\textbf{fields} &horns &domestic &\textbf{fish} \\
			hibernate &\textbf{black} &\textbf{paws} &\textbf{scavenger} &\textbf{tail} &\textbf{fields} \\
			\textbf{hooves} &\textbf{paws} &blue &\textbf{tail} &skimmer &\textbf{forest} \\
			jungle &weak &\textbf{hooves} &\textit{white} &arctic &\textbf{scavenger} \\ \hline
	\end{tabular}}
\end{table*}

\subsubsection{Interpretability of Selected Attributes} 
In the third experiment, we present the visualization results of attribute selection. We find that ZSL methods obtain the best performance when selecting about $20\%$ attributes as shown in Figure \ref{fig_24}. Therefore, we illustrate the top $20\%$ key attributes selected by DAP, LatEm and SAE on four datasets in Figure \ref{fig_subset}. Three rows in each figure are DAP, LatEm and SAE from top to bottom, and yellow bars indicate the attributes which are selected by the corresponding methods. We can see that the attribute subsets selected by different ZSL methods are highly coincident for the same dataset, which demonstrates that the selected attributes are the key attributes for discriminating test data. Specifically, we enumerate the key attributes selected by three ZSL methods on AwA dataset in Table \ref{table_att}. Attributes in boldface indicate that they are simultaneously selected by all the three ZSL methods, and attributes in italics indicate that they are selected by any two of these three methods. It can be observed that 13 attributes ($65\%$) are selected by all the three ZSL methods. These three attribute subsets selected by diverse ZSL models are very similar, which is another evidence that IAS is reasonable and useful for zero-shot classification.

\begin{figure*}[!t] 
	\hfill
	\begin{center}
		\includegraphics[width = 0.95\textwidth]{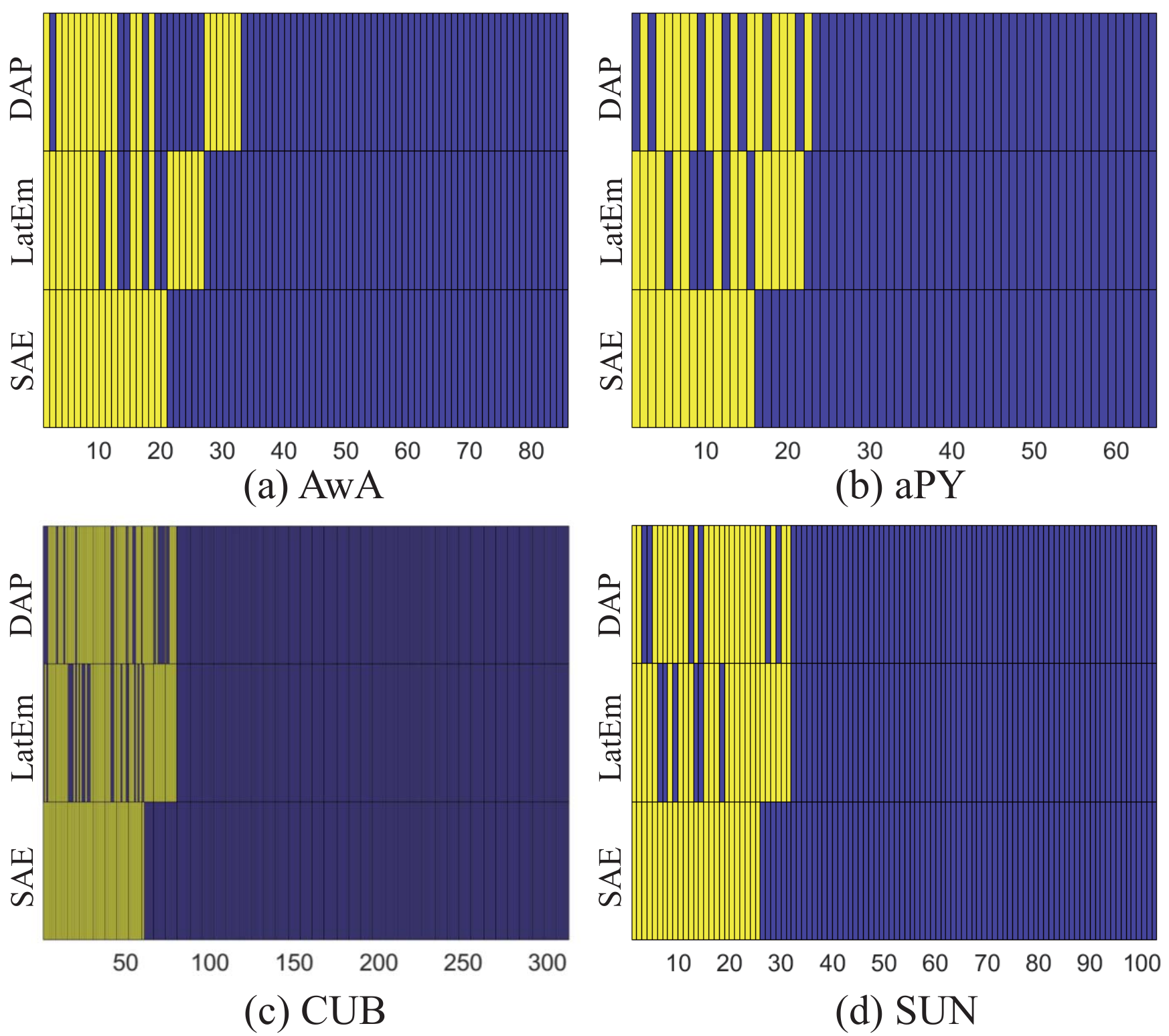}  
	\end{center}
	\caption{Visualization of attribute subsets selected by three different ZSL methods on four datasets. Three rows in each figure are DAP, LatEm and SAE from top to bottom. The horizontal axis represents the attribute, and yellow bars indicate the attributes selected by the corresponding methods.}
	\label{fig_subset}
\end{figure*}

\section{Conclusion}
\label{sec_conclu}
We present a novel and effective iterative attribute selection model to improve existing attribute-based ZSL methods. In most of the previous ZSL works, all the attributes are assumed to be effective and treated equally. However, we notice that attributes have different predictability and discriminability for diverse objects. Motivated by this observation, we propose to select the key attributes to build ZSL model. Since training classes and test classes are disjoint in ZSL tasks, we introduce the out-of-the-box data to mimic test data to guide the progress of attribute selection. The out-of-the-box data generated by a tailor-made attribute-based deep generative model has a similar distribution to the test data. Hence, the attributes selected by IAS based on the out-of-the-box data can be effectively generalized to the test data. To evaluate the effectiveness of IAS, we conduct extensive experiments on four standard ZSL datasets. Experimental results demonstrate that IAS can effectively select the key attributes for ZSL tasks and significantly improve state-of-the-art ZSL methods.

In this work, we select the same attributes for all the unseen test classes. Obviously, this is not the global optimal solution to select attributes for diverse categories. In the future, we will consider a tailor-made attribute selection model that can select the special subset of key attributes for each test class.

\section*{Acknowledgments}
This work is supported in part by ARC under Grant LP150100671 and Grant DP180100106, in part by NSFC under Grant 61373063 and Grant 61872188, in part by the Project of MIIT under Grant E0310/1112/02-1, in part by the Collaborative Innovation Center of IoT Technology and Intelligent Systems of Minjiang University under Grant IIC1701, and in part by China Scholarship Council.

\bibliographystyle{APA}

\end{document}